\documentclass{article}

\usepackage{microtype}
\usepackage{graphicx}
\usepackage{subcaption}
\usepackage{booktabs} 
\usepackage{subcaption}
\usepackage{overpic}
\usepackage{pgfplots}
\usepackage{multirow}
\usepackage{tikz}
\usepackage{pgfplots}

\usepackage{hyperref}

\usepackage[preprint]{icml2026}

\usepackage{amsmath}
\usepackage{amssymb}
\usepackage{mathtools}
\usepackage{amsthm}

\usepackage[capitalize,noabbrev]{cleveref}

\theoremstyle{plain}
\newtheorem{theorem}{Theorem}[section]

\newtheorem{lemma}[theorem]{Lemma}

\theoremstyle{definition}

\theoremstyle{remark}

\usepackage[textsize=tiny]{todonotes}

\icmltitlerunning{Submission and Formatting Instructions for ICML 2026}

\begin{document}

\twocolumn[
  \icmltitle{From Basis to Basis: Gaussian Particle Representation \\ for Interpretable PDE Operators}



  \icmlsetsymbol{equal}{*}

  \begin{icmlauthorlist}
    \icmlauthor{Zhihao Li}{hkust_gz}
    \icmlauthor{Yu Feng}{hkust_gz}
    \icmlauthor{Zhilu Lai}{hkust_gz,hkust}
    \icmlauthor{Wei Wang}{hkust_gz,hkust}
  \end{icmlauthorlist}

  \icmlaffiliation{hkust_gz}{The Hong Kong University of Science and Technology (Guangzhou), Guangzhou, China}
  \icmlaffiliation{hkust}{The Hong Kong University of Science and Technology, Hong Kong SAR, China}

  \icmlcorrespondingauthor{Zhihao Li}{zli416@connect.hkust-gz.edu.cn}
  \icmlcorrespondingauthor{Wei Wang}{weiwcs@ust.hk}

  \icmlkeywords{Machine Learning, ICML}

  \vskip 0.3in
]



\printAffiliationsAndNotice{}  


\begin{abstract}
Learning PDE dynamics for fluids increasingly relies on neural operators and Transformer-based models, yet these approaches often lack interpretability and struggle with localized, high-frequency structures while incurring quadratic cost in spatial samples. We propose to represent fields with a \emph{Gaussian basis}, where learned atoms carry explicit geometry (centers, anisotropic scales, weights) and form a compact, mesh-agnostic, directly visualizable state. Building on this representation, we introduce a \emph{Gaussian Particle Operator} that acts \emph{in modal space}: learned \emph{Gaussian modal windows} perform a Petrov--Galerkin measurement, a \emph{PG Gaussian Attention} effects global cross-scale coupling. This basis-to-basis design is resolution-agnostic and achieves near-linear complexity in $N$ for fixed modal budget, supporting irregular geometries and seamless 2D$\to$3D extension. On standard PDE benchmarks and real datasets, our method attains state-of-the-art–competitive accuracy while providing intrinsic interpretability.
\end{abstract}

\section{Introduction}

Fluid-governed PDEs \citep{02:pde,82:intro} underpin critical real-world systems, from numerical weather prediction and climate reanalysis to ocean circulation and engineering aerodynamics \citep{20:turb,87:turb}. Classical solvers (finite element/volume and spectral methods) \citep{02:pde,12:fem,84:diff} deliver high fidelity but face persistent challenges: strongly multi-scale dynamics, mesh dependence and complex geometries, stiffness in time integration, and high computational cost for long rollouts. Neural operators \citep{21:fno,23:no,21:DeepONet} emerged as data-driven maps between function spaces, enabling resolution-agnostic surrogates; more recently, Transformer-based operators \citep{21:GT, 23:GNOT} leverage attention to capture long-range interactions and achieve strong empirical performance on diverse PDE tasks. However, these models still suffer from two key limitations: \emph{(i) poor interpretability}—latent features and attention weights are rarely tied to physically meaningful modes; and \emph{(ii) localization/frequency bias}—global self-attention tends to favor low-rank, low-frequency correlations, making sharp fronts, vortical filaments, and other high-frequency structures harder to model, while na\"ively scaling attention over $N$ spatial samples incurs $\mathcal{O}(N^2)$ cost \citep{25:HarnessingSP}.

We advocate representing fluid fields with a \emph{Gaussian (particle) basis} rather than a fixed grid, hand-picked spectra \citep{21:mwt,24:M2NO}, or a monolithic implicit network \citep{23:OperatorLN, 24:AROMA}. Gaussian atoms carry \emph{explicit geometry}—centers and (anisotropic) scales—which align naturally with coherent flow structures (vortices, filaments, fronts), afford multi\-scale locality, and are directly \emph{visualizable} and \emph{differentiable}. This basis is mesh\-agnostic and compact, supports irregular boundaries, and extends seamlessly from 2D to 3D \citep{00:RadialBF,91:UniversalAU}. While prior neural representations often rely on global Fourier features, wavelets, or black\-box INRs, \emph{learning a particleized Gaussian basis as the primary state of the field} has been scarcely explored and offers a clearer bridge to physical intuition. Concretely, a field is approximated by weighted Gaussians with $\mu_i$ (centers), $\sigma_i$ (scales, possibly anisotropic), and $w_i$ (mixture weights) learned from data; evaluating these atoms at query locations yields a compact coefficient vector that serves as the field’s interpretable latent state (basis).

\begin{figure*}[t]
    \centering
    \includegraphics[width=\textwidth]{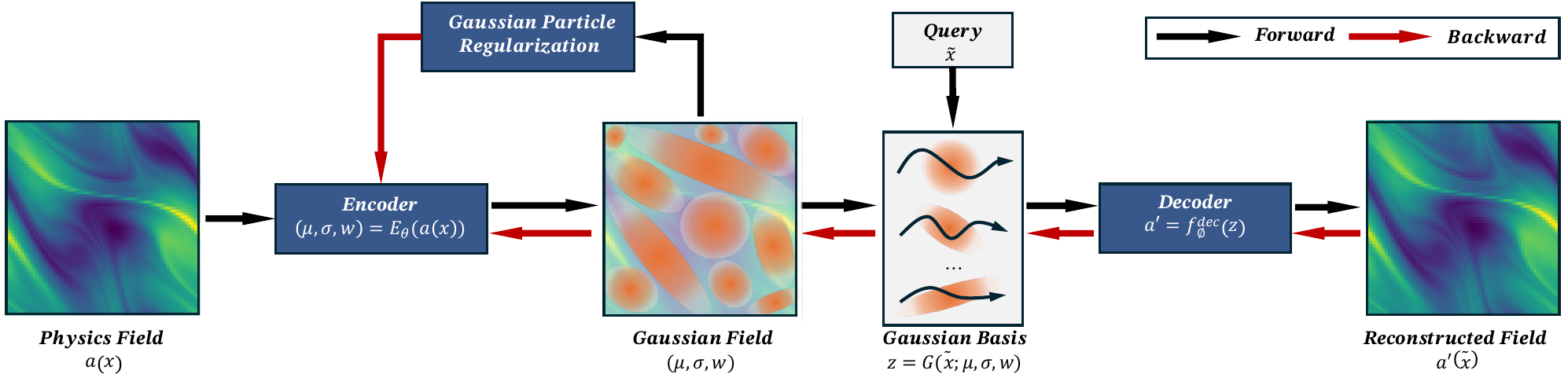}
    \caption{\textbf{Overview of the Gaussian Basis Field framework.} Given the physics field $a(x)$, an encoder $E_\theta$ produces $G$ Gaussian components per spatial location (mean $\mu$, scale $\sigma$, and mixture weight $w$). These define a Gaussian field that is evaluated at queries to form the basis $z$, which is decoded by a fixed decoder $f_\phi^{\mathrm{dec}}$ to reconstruct the output field.}
    \label{fig:gaussian_field}
\end{figure*}

We present an interpretable, resolution-agnostic neural operator that learns a \emph{Gaussian particle} basis for fields and couples it with a \emph{Petrov--Galerkin Gaussian Attention} layer, enabling basis-to-basis modeling with near-linear complexity and strong accuracy on 2D/3D and irregular domains. Our contributions are summarized as follows:

(1) \textbf{Gaussian Particle Representation.} An encoder learns per-site Gaussians $(\mu,\sigma,w)$; evaluating at arbitrary queries yields an interpretable, visualizable basis $Z$ that is mesh-agnostic and extends seamlessly to 3D.

(2) \textbf{PG Gaussian Attention.} Learned \emph{Gaussian modal windows} perform PG-style measurement ($N\!\to\!G$), a $G{\times}G$ attention implements the modal kernel (global coupling), and the result is scattered back ($G\!\to\!N$), yielding a principled and interpretable operator.

(3) \textbf{Efficiency \& Scalability.} With a small modal budget $G\!\ll\!N$, spatial transfers scale $\mathcal{O}(N)$ and modal attention is independent of $N$, delivering near-linear growth with resolution and supporting multi-step operator stacking.

(4) \textbf{Empirical validation.} Across standard PDE benchmarks and real datasets (including ERA5 and 3D/irregular domains), our approach attains state-of-the-art–competitive accuracy while providing \emph{intrinsic interpretability} (particle and modal diagnostics), yielding improved spectral fidelity and rollout stability—demonstrating a favorable accuracy–interpretability trade-off.

\section{Methodology}
\begin{figure*}
    \centering
    \includegraphics[width=\linewidth]{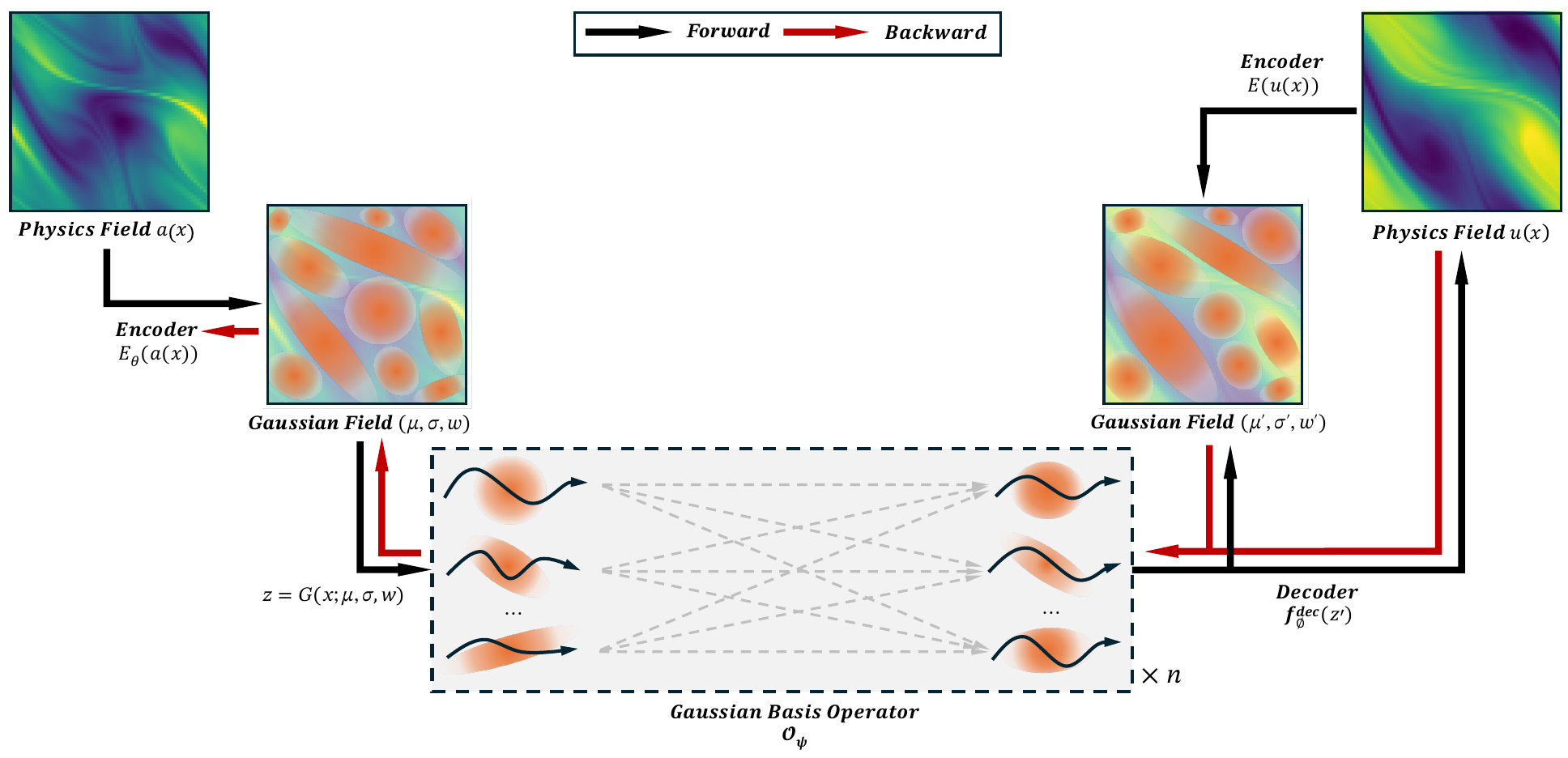}
    \vspace{-10pt}
    \caption{\textbf{Architecture of the Gaussian Particle Operator (GPO).} The pipeline encodes $a(\mathbf{x})$ into a Gaussian field $(\mu,\sigma,w)$, evaluates a basis $Z$, applies the modal operator $\mathcal{O}_{\psi}$, and decodes to $\hat{u}(\mathbf{x})$; black arrows denote forward computation and red arrows denote gradients.}
    \label{fig:framework}
\end{figure*}
\subsection{Gaussian Basis Representation}
\label{sec:gaussian_basis}

\paragraph{From physics field to Gaussian field and basis.}
We represent a spatial field $a:\Omega\!\subset\!\mathbb{R}^{d}\!\to\!\mathbb{R}^{d_a}$ by a set of \emph{Gaussian particles} placed at each sample location $\mathbf{x}_j$. Each particle is parameterized by a center $\mu_{j,i}\!\in\!\mathbb{R}^{d}$, axis-aligned scale $\sigma_{j,i}\!\in\!\mathbb{R}^{d}$, and mixture weight $w_{j,i}\!\in\![0,1]$ with $\sum_{i=1}^{G}w_{j,i}=1$. The associated (unnormalized) kernel is
\begin{equation}
G(\tilde{\mathbf{x}};\mu_{j,i},\sigma_{j,i})
=\exp\!\Big(-\tfrac12\big\|(\tilde{\mathbf{x}}-\mu_{j,i})/\sigma_{j,i}\big\|_2^2\Big).
\end{equation}
Evaluating these kernels at query $\tilde{\mathbf{x}}_{j}$ yields the \emph{Gaussian basis} coefficients
\begin{equation}
z_{j,i}=w_{j,i}\,G(\tilde{\mathbf{x}}_{j};\mu_{j,i},\sigma_{j,i}),
\label{eq:weighted-eval}
\end{equation}
where $\mathbf{z}_j=[z_{j,1},\ldots,z_{j,G}]^\top\in\mathbb{R}^{G}$. Physically, the Gaussian field acts as a mollified, locally supported expansion of $a(\cdot)$; the coefficients in Eq.(\ref{eq:weighted-eval}) can be viewed as localized averages of $a$ under data-adaptive windows $(\mu,\sigma)$, while $w$ distributes mass among overlapping particles. This basis is resolution-agnostic and naturally extends to irregular geometries.

\paragraph{Encoder.}
Given samples $\{(x_j,a_j)\}_{j=1}^{N}$, the encoder $E_{\theta}$ injects geometry by
embedding coordinates and fusing them with field features. Concretely, we form
$\eta_j=[\,a_j,\;\gamma(x_j)\,]$, where $\gamma(\cdot)$ is a fixed Fourier-feature map (or a small MLP),
and compute
\begin{align}
\phi_j &= \mathrm{ReLU}(W_{\mathrm{in}}\eta_j),\\
\mu_j &= W_\mu\,\mathrm{ReLU}(U_\mu \phi_j),\\
\sigma_j &= \mathrm{Softplus}\!\big(W_\sigma\,\mathrm{ReLU}(U_\sigma \phi_j)\big),\\
w_j &= \mathrm{Softmax}\!\big(W_w\,\mathrm{ReLU}(U_w \phi_j)\big),
\end{align}
reshaped as $\mu_{j,i},\sigma_{j,i}\in\mathbb{R}^{d}$ and $w_{j,i}\in[0,1]$ with $\sum_i w_{j,i}=1$. We use $\gamma(x)$ as a fixed positional encoding (Fourier features):
$\gamma(x)=[\sin(2\pi Bx),\cos(2\pi Bx)]$, where $B\in\mathbb{R}^{m\times d}$ is sampled once from
$\mathcal{N}(0,\sigma_B^2)$ and kept fixed; the resulting coordinate embedding has dimension $2m$.

\paragraph{Gaussian basis evaluation.}
With $(\mu,\sigma,w)$ predicted by $E_{\theta}$, the weighted Gaussian evaluation Eq.(\ref{eq:weighted-eval}) produces the per-site latent vector $\mathbf{z}_j$. \emph{Physically}, $\mu$ encodes particle locations, $\sigma$ controls receptive-field sizes (anisotropy along axes), and $w$ balances overlapping contributions. \emph{Computationally}, the map $(\mu,\sigma,w,\tilde{\mathbf{x}})\mapsto \mathbf{z}$ is local and embarrassingly parallel.

\paragraph{Decoder.}
A lightweight MLP head $f^{\mathrm{dec}}_{\phi}:\mathbb{R}^{G}\!\to\!\mathbb{R}^{c_{\mathrm{out}}}$ regresses from $\mathbf{z}_j$ to the field value at the query:
\begin{equation}
\hat a(\tilde{\mathbf{x}}_{j})=f^{\mathrm{dec}}_{\phi}(\mathbf{z}_j).
\label{eq:decoder}
\end{equation}
In practice, we use a two-layer perceptron with ReLU.

\paragraph{Gaussian particle regularization.}
Because the constraints act on the \emph{particle parameters} produced by the encoder, we impose them at the Gaussian-field level (conceptually tied to $E_{\theta}$ but applied after parameter prediction):
\begin{align}
\mathcal{L}_{\mu}&=\frac{1}{N}\sum_{j=1}^{N}\Big\|\sum_{i=1}^{G} w_{j,i}\mu_{j,i}-\mathbf{x}_j\Big\|_2^{2},\\
\mathcal{L}_{\sigma} &= \frac{1}{NGd}\sum_{j,i,\ell}\Big[\,[\sigma_{j,i,\ell}-\sigma_{\max}]_{+}+[\sigma_{\min}-\sigma_{j,i,\ell}]_{+}\Big],
\end{align}
which promote spatial interpretability (centers near coordinates), avoid degenerate particles, and discourage overly peaky mixtures.

\paragraph{Overview pipeline.}
Eqs.(~\ref{eq:weighted-eval}–\ref{eq:decoder}) define the Gaussian-field training pipeline, and the complete forward/backward diagram is in Figure~\ref{fig:gaussian_field}. We minimize the reconstruction loss together with $\mathcal{L}_{\mu}$, $\mathcal{L}_{\sigma}$. 

\paragraph{Approximation capacity of the Gaussian basis.}
We record a standard density result:

\begin{lemma}[Density of Gaussian mixtures]
\label{lem:gm-density-main}
Let $\Omega\!\subset\!\mathbb{R}^{d}$ be compact. Finite \emph{linear combinations} of (possibly anisotropic) Gaussian kernels are dense in $C(\Omega)$
and in $L^{r}(\Omega)$ for $1\le r<\infty$. Consequently, for any continuous scalar field $v$ and $\varepsilon>0$,
there exist $G$ and parameters $\{(\mu_i,\sigma_i,c_i)\}_{i=1}^{G}$ with $c_i\in\mathbb{R}$ such that
\[
\Big\|v(\cdot)-\sum_{i=1}^{G} c_i\,
\exp\!\Big(-\tfrac{1}{2}\|(\cdot-\mu_i)/\sigma_i\|_2^{2}\Big)\Big\|_{\infty}<\varepsilon.
\]
Vector-valued fields admit componentwise approximation. \emph{(Proof in Appx.\ref{sec:exp_gf})}
\end{lemma}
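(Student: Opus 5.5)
The plan is to prove density in $C(\Omega)$ with the sup norm first, and then obtain the $L^r$ statements and the vector-valued case as corollaries. It is enough to use \emph{isotropic} Gaussians with one common scale, $\sigma_i=\sigma\mathbf{1}$. These are a special case of the family in the lemma, so density of the smaller family implies density of the larger one.

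The main route is Stone--Weierstrass. Let $\mathcal{A}$ be the linear span of the functions $g_{\mu,\sigma}(x)=\exp(-\tfrac12\|x-\mu\|^2/\sigma^2)$ restricted to $\Omega$. We check the three hypotheses.

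\textbf{Closure under products.} The product of two Gaussians is a Gaussian up to a positive constant. Completing the square gives
\[
g_{\mu_1,\sigma_1}(x)\,g_{\mu_2,\sigma_2}(x)=C\,g_{\mu_3,\sigma_3}(x),
\]
where $\sigma_3^{-2}=\sigma_1^{-2}+\sigma_2^{-2}$, $\mu_3=\sigma_3^2(\mu_1/\sigma_1^2+\mu_2/\sigma_2^2)$, and $C>0$ depends only on the parameters. In the anisotropic case the same identity holds coordinate by coordinate. Hence $\mathcal{A}$ is a subalgebra.

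\textbf{Vanishing nowhere.} Each Gaussian is strictly positive.

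\textbf{Separating points.} Given $x\neq y$ in $\Omega$, take $\mu=x$. Then $g_{x,\sigma}(x)=1>g_{x,\sigma}(y)$, so the two points are separated.

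\textbf{Constants.} Stone--Weierstrass in its standard form also asks for the constants. These are not literally in $\mathcal{A}$, but we do not need that. On the compact set $\Omega$, the function $g_{0,\sigma}\to 1$ uniformly as $\sigma\to\infty$, so $1$ lies in the closure $\overline{\mathcal{A}}$. Alternatively, we can invoke the version of the theorem for algebras that vanish at no point and separate points. Either way $\overline{\mathcal{A}}=C(\Omega)$. This is the main (mild) obstacle: verifying the algebra property carefully, including the anisotropic bookkeeping, and handling the constant function. The displayed $\varepsilon$-statement follows directly, with $G$ the number of terms and $c_i$ the coefficients, including absorbed constants such as $C$.

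\textbf{$L^r$ density.} For $1\le r<\infty$, $\Omega$ is compact, so it has finite Lebesgue measure. Continuous functions are dense in $L^r(\Omega)$ by Lusin's theorem or the standard mollification argument. Given $f\in L^r(\Omega)$, first choose $v\in C(\Omega)$ with $\|f-v\|_r<\varepsilon/2$. Then choose a Gaussian combination $s$ with $\|v-s\|_\infty<\varepsilon/(2|\Omega|^{1/r})$, which gives $\|v-s\|_r\le|\Omega|^{1/r}\|v-s\|_\infty<\varepsilon/2$. The triangle inequality gives $\|f-s\|_r<\varepsilon$.

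\textbf{Vector-valued fields.} For $v=(v_1,\dots,v_{d_a})$, approximate each component to within $\varepsilon$ as above, then concatenate the Gaussian families. Each component uses its own coefficient vector, and this yields a componentwise error below $\varepsilon$.

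As a backup to Stone--Weierstrass, one could use convolution with a Gaussian mollifier. For $\tilde v$ a compactly supported continuous extension of $v$ to $\mathbb{R}^d$ (by Tietze's theorem), the convolution $\tilde v*g_{0,\sigma}/(2\pi\sigma^2)^{d/2}$ converges to $v$ uniformly on $\Omega$ as $\sigma\to0$. One then approximates the convolution integral by Riemann sums, which are finite Gaussian combinations, using uniform continuity on the compact domain. This route needs more epsilon bookkeeping, which is why Stone--Weierstrass is the primary plan.
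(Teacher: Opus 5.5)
Your proof is correct, but your main route is different from the paper's. The paper's proof is exactly what you list as your backup: Tietze extension to $\tilde v\in C_c(\mathbb{R}^d)$, Gaussian mollification as an approximate identity, then a uniform Riemann-sum discretization of the convolution with weights $w_i=\tilde v(\mu_i)\Delta_i$. You go through Stone--Weierstrass instead.

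\textbf{What each route buys.} Your route is shorter. It avoids the extension theorem and all quadrature error bookkeeping. The only computations are the Gaussian product identity and either the limit $g_{0,\sigma}\to 1$ uniformly on $\Omega$ or the version of the theorem for algebras that vanish at no point. The paper's route is constructive and proves slightly more: all atoms share a single covariance, the centers lie on a quadrature grid, and the coefficients are explicit samples of $v$. Your backup also correctly keeps the normalization $(2\pi\sigma^2)^{-d/2}$, which the paper absorbs into the weights.

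\textbf{One inconsistency to fix.} Your opening claim that one common scale suffices is not what your Stone--Weierstrass argument delivers. The span of Gaussians with a fixed $\sigma$ is not closed under products: two scale-$\sigma$ atoms multiply to a scale-$\sigma/\sqrt{2}$ atom. Your treatment of constants also sends $\sigma\to\infty$. So $\mathcal{A}$ must contain all widths. This is harmless for the lemma as stated, since it allows per-atom $\sigma_i$.

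If you do want a fixed scale via Stone--Weierstrass, proceed as follows. Write $g_{\mu,\sigma}(x)=e^{-\|x\|^2/(2\sigma^2)}\,e^{\langle x,\mu\rangle/\sigma^2}\,e^{-\|\mu\|^2/(2\sigma^2)}$. Apply the theorem to the span of $e^{\langle a,x\rangle}$, $a\in\mathbb{R}^d$; this is an algebra that contains $1$ and separates points. Approximate $v(x)\,e^{\|x\|^2/(2\sigma^2)}$ uniformly by such sums, then multiply back by $e^{-\|x\|^2/(2\sigma^2)}\le 1$.
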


\subsection{Petrov--Galerkin Gaussian Attention}
\label{sec:pg-attn}

Sec~\ref{sec:gaussian_basis} defines \emph{local} Gaussian bases: at each location $j$, we obtain $G$ weighted coefficients $\mathbf{z}_j\in\mathbb{R}^{G}$ from particles $(\mu,\sigma,w)$. To learn a resolution-agnostic operator, we adopt a Petrov--Galerkin (PG) view—expanding in a trial space and testing with a (possibly different) test space—and move from the spatial grid to \emph{modal space} \citep{06:RevisitingSF,82:StreamlinePG}. Learned Gaussian modal windows first \emph{measure} the field by pooling across locations, a global \emph{mode coupling} acts on the $G$ components, and the result is \emph{scattered back} to locations. This PG pipeline yields an efficient and interpretable basis-to-basis operator.

\subsubsection{From Petrov--Galerkin projection to a Gaussian-basis operator}
\label{sec:pg-link}

In Petrov--Galerkin, we expand the field in a \emph{trial} space and evaluate it with a \emph{test} space. Here, Gaussian particles form the trials, while \emph{Gaussian modal windows} serve as discrete tests that pool local information into global modes.

\paragraph{Trial functions (Gaussian basis).}
Let the unnormalized Gaussian particle (anchored at location $j$, component $i$) be
\begin{equation}
\phi_{j,i}(\mathbf{x})
=\exp\!\Big(-\tfrac{1}{2}\big\|(\mathbf{x}-\mu_{j,i})/\sigma_{j,i}\big\|_2^{2}\Big),
\Sigma_{j,i}=\mathrm{diag}(\sigma_{j,i}^2).
\label{eq:pg-trial}
\end{equation}
With weighted evaluations (Sec.~\ref{sec:gaussian_basis}), each site provides
$\mathbf{z}_j=[z_{j,1},\ldots,z_{j,G}]^\top\in\mathbb{R}^{G}$, where
$z_{j,i}=w_{j,i}\,\phi_{j,i}(\tilde{\mathbf{x}}_j)$.

\paragraph{Test functions (Gaussian modal windows).}
We distinguish two normalizations: (i) row-normalized assignments $\sum_{g}p_{j,g}=1$ (used to map each
location to modes), and (ii) column-normalized weights $\bar p_{j,g}=p_{j,g}/\sum_{j'}p_{j',g}$ so that
$\sum_{j}\bar p_{j,g}=1$ (used for PG measurement).
\begin{equation}
\psi_g(x)\approx \sum_{j=1}^{N}\bar p_{j,g}\,\delta(x-\tilde x_j),
\qquad \bar p_{j,g}=\frac{p_{j,g}}{\sum_{j'}p_{j',g}}.
\end{equation}
Using a linear projection of coefficients $\mathbf{s}_j=\mathbf{z}_j W_z\in\mathbb{R}^{D}$,
the PG \emph{measurement} (test of the trial field) yields modal tokens
\begin{equation}
t_g=\sum_{j=1}^{N}\bar p_{j,g}\,s_j,\qquad s_j=z_j W_z.
\label{eq:pg-measure}
\end{equation}

\paragraph{Modal coupling and scatter.}
Let $\kappa:\{1,\ldots,G\}^2\to\mathbb{R}^{D\times D}$ be a (learned) coupling kernel over modes.
PG updates the modal state and scatters it back:
\begin{align}
    U_g=\sum_{g'=1}^{G} \kappa(g,g')\,t_{g'}\in\mathbb{R}^{D},\\
    \widetilde{\mathbf{z}}_j=\Big(\sum_{g=1}^{G} p_{j,g}\,U_g\Big) W_{\mathrm{out}}\in\mathbb{R}^{G}.
\label{eq:pg-couple-scatter}
\end{align}
Stacking sites gives $\widetilde Z\in\mathbb{R}^{N\times G}$. Algebraically,
\begin{equation}
\boxed{\;\;\widetilde Z
\;\approx\;
A\,\mathcal{K}\,A^{\top}\,(Z\,W_{z})\,W_{\mathrm{out}},}
\label{eq:pg-lowrank}
\end{equation}
where $A[j,g]=p_{j,g}$ and $\mathcal{K}[g,g']$ encodes modal coupling. 
Thus PG supplies the \emph{structure}: \(\)test (measure) $\rightarrow$ couple $\rightarrow$ scatter.

\begin{table*}[t] 
\caption{\textbf{Performance comparison with baselines on benchmarks.} $L_{2}$ loss is recorded.} 
\label{table:all} 
\begin{sc} 
\renewcommand{\multirowsetup}{\centering} 
\footnotesize 
\resizebox{\linewidth}{!}{ 
\begin{tabular}{c|cc|ccc|c|cc} 
\toprule 
Model & NS2D & NS3D & ERA5-temp & ERA5-wind u & Carra & Airfoil & Turbulent & PlanetSWE \\ 
\midrule 
(Geo-)FNO & 3.24E-02 & 5.07E-01 & 7.09E-03 & 1.02E-01 & 3.50E-01 & 5.12E-03 & 4.82E-01 & 9.23E-02 \\ 
M2NO      & 2.76E-02 & 4.65E-01 & \underline{3.35E-03} & \underline{7.15E-02} & 3.62E-01 & 1.98E-03 & 4.53E-01 & 8.96E-02 \\
CNO       & 9.27E-02 & 8.12E-01 & 1.24E-02 & 1.86E-01 & 6.89E-01 & 4.65E-03 & 7.91E-01 & 1.57E-01 \\
LSM       & \underline{3.11E-02} & 3.80E-01 & 5.86E-03 & 8.23E-02 & 4.05E-01 & 5.37E-03 & \underline{4.16E-01} & 9.58E-02 \\
AMG       & 3.12E-02 & 4.32E-01 & 6.11E-03 & 8.79E-02 & \underline{3.21E-01} & 3.09E-03 & 4.47E-01 & \underline{8.14E-02} \\
\midrule
GT        & 8.81E-02 & 5.39E-01 & 5.44E-03 & 1.55E-01 & 3.73E-01 & 8.26E-03 & 5.41E-01 & 1.23E-01 \\
GNOT      & 7.19E-01 & 1.01E+00 & 1.55E-02 & 3.49E-01 & 7.57E-01 & 1.53E-02 & 9.86E-01 & 2.14E-01 \\
Transolver& 3.76E-02 & 5.29E-01 & 4.18E-03 & 1.06E-01 & 3.76E-01 & 9.46E-03 & 5.33E-01 & 1.05E-01 \\
ONO       & 4.26E-02 & 8.83E-01 & 1.45E-02 & 3.49E-01 & 7.25E-01 & 5.26E-03 & 8.27E-01 & 1.69E-01 \\
LNO       & 4.81E-02 & \underline{3.68E-01} & 7.32E-03 & 1.31E-01 & 4.36E-01 & \underline{4.39E-03} & 5.69E-01 & 9.12E-02 \\
\midrule
\textbf{GPO} & \textbf{3.02E-02} & \textbf{3.44E-01} & \textbf{2.26E-03} & \textbf{6.68E-02} & \textbf{2.97E-01} & \textbf{1.12E-03} & \textbf{3.95E-01} & \textbf{8.01E-02} \\
\bottomrule 
\end{tabular}} 
\end{sc} 
\end{table*}

\subsubsection{Attention as a parameterization of the PG operator}
\label{sec:attn-param}

We now instantiate Eq.(~\ref{eq:pg-lowrank}) with a multi-head attention layer that is global in
\emph{modal} space and local in the $N\!\leftrightarrow\!G$ transfers. Let
$Z\in\mathbb{R}^{N\times G}$ and particle parameters
$(\mu,\sigma,w)\in\mathbb{R}^{N\times G\times d}\times\mathbb{R}^{N\times G\times d}\times\mathbb{R}^{N\times G}$.

\paragraph{Learned Gaussian modal windows.}
For head $h$, form a per-site descriptor $\xi_j=[\,\mathbf{z}_j,\mathbf{w}_j,\mu_j,\sigma_j\,]\in\mathbb{R}^{G(2d+2)}$ and project to $h_j^{(h)}\!\in\!\mathbb{R}^{D}$. A softmax over modes produces windows
\begin{equation}
p^{(h)}_{j,g}=\mathrm{softmax}_{g}\!\big(W_{p}^{(h)}h^{(h)}_j\big),
\label{eq:attn-win}
\end{equation}
which instantiate the PG test functions in discrete form. $W_{p}^{(h)}\!\in\!\mathbb{R}^{G\times D}$ is the (head-specific) linear projection that maps the local embedding
$h^{(h)}_j$ at location $j$ to mode logits over the $G$ Gaussian modes.

\paragraph{PG measurement $N\!\to\!G$.}
Project coefficients $\mathbf{s}^{(h)}_j=\mathbf{z}_j W^{(h)}_z$ and compute tokens
\begin{equation}
t^{(h)}_g=\frac{\sum_{j} p^{(h)}_{j,g}\,\mathbf{s}^{(h)}_j}{\sum_{j} p^{(h)}_{j,g}}
\in\mathbb{R}^{D},
T^{(h)}=[t_1^{h},\ldots,t_G^{h}]\in\mathbb{R}^{G\times D},
\label{eq:attn-agg}
\end{equation}
which matches the PG measurement in Eq.(~\ref{eq:pg-measure}).

\paragraph{Global modal coupling ($G{\times}G$ attention).}
Scaled dot-product attention parameterizes the kernel $\mathcal{K}$:
\begin{align}
Q^{(h)} &= T^{(h)} W^{(h)}_{Q},
K^{(h)} = T^{(h)} W^{(h)}_{K}, 
V^{(h)} = T^{(h)} W^{(h)}_{V}, \label{eq:attn-qkv}\\
\alpha^{(h)} &= \mathrm{softmax}\!\left(\frac{Q^{(h)} {K^{(h)}}^{\!\top}}{\sqrt{D}}\right), \label{eq:attn-softmax}\\
\widetilde{T}^{(h)} &= \alpha^{(h)} V^{(h)} \;\in\; \mathbb{R}^{G\times D}. \label{eq:attn-out}
\end{align}
Here $\alpha^{(h)}(g,g')$ plays the role of a data-driven modal coupling $\kappa(g,g')$.

\paragraph{Scatter $G\!\to\!N$ and readout.}
Using the same windows, scatter the coupled modes back and read out to $G$ coefficients:
\begin{align}
    y^{(h)}_j&=\sum_{g} p^{(h)}_{j,g}\,U^{(h)}_{g}\in\mathbb{R}^{D},\\
\widetilde{\mathbf{z}}_j&=\Big(\big\|_{h=1}^{H} y^{(h)}_j\Big) W_{\mathrm{out}}\in\mathbb{R}^{G},\\
\widetilde Z&=[\widetilde{\mathbf{z}}_1^\top;\ldots;\widetilde{\mathbf{z}}_N^\top].
\label{eq:attn-scatter}
\end{align}

To stabilize training and preserve the per-site total mass (row-wise $\ell_1$ sum), we first take a
convex residual update with a mixing coefficient $\lambda\!\in\![0,1]$ and then renormalize each row:
\begin{align}
\widehat Z &= (1-\lambda)\,Z \;+\; \lambda\,\widetilde Z, \label{eq:residual}\\
Z'_{j,:} &= \frac{\sum_{g=1}^{G} Z_{j,g}}{\sum_{g=1}^{G} \widehat Z_{j,g} + \varepsilon}\;\widehat Z_{j,:}\,,
\qquad j=1,\ldots,N, \label{eq:row-renorm}
\end{align}
where $\varepsilon\!>\!0$ avoids division by zero. Eq.(~\ref{eq:residual}) provides a conservative
blend between the old and updated coefficients, while Eq.(~\ref{eq:row-renorm}) rescales each site’s
coefficients so that $\sum_{g} Z'_{j,g}=\sum_{g} Z_{j,g}$. 

\noindent\emph{Complexity.}
Per head, forming the window logits requires projecting $\xi_j\in\mathbb{R}^{G(2d+2)}$ to $\mathbb{R}^{D}$,
costing $\mathcal{O}(BHN\cdot G(2d+2)\cdot D)$. The two transfers $N\!\leftrightarrow\!G$ cost
$\mathcal{O}(BHNGD)$, and modal attention costs $\mathcal{O}(BHG^2D)$ (independent of $N$).
Overall, for fixed $G$ the dominant term is linear in $N$, with explicit dependence on spatial dimension $d$.

\paragraph{Expressivity of the modal operator.}
We formalize that PG Gaussian Attention can approximate a broad class of continuous operators:

\begin{theorem}[Universal approximation in modal form]
\label{thm:universality-main}
Let $\mathcal{T}:L^{p}(\Omega;\mathbb{R}^{c_{\mathrm{in}}})\!\to\!L^{q}(\Omega;\mathbb{R}^{c_{\mathrm{out}}})$
be continuous on bounded sets and admit either a Mercer/Hilbert--Schmidt kernel
$K(\mathbf{x},\mathbf{x}')$ or a low-rank factorization
$\mathcal{T}\!\approx\!\Phi(\cdot)\,\mathcal{K}\,\Phi(\cdot)^{\!\top}$ with continuous
$\Phi:\Omega\!\to\!\mathbb{R}^{m}$. Then, for any $\varepsilon>0$, there exist a modal budget $G$
and parameters $\Theta$ of our encoder, Gaussian modal windows, PG Gaussian Attention, and decoder
such that $\|\mathcal{G}_{\Theta}-\mathcal{T}\|<\varepsilon$ (operator norm on bounded subsets).
\emph{(Proof in Appx.\ref{sec:exp_gpo})}
\end{theorem}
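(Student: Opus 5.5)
We reduce to a finite-rank operator through a chain of $\varepsilon/4$ approximations, then realize that finite-rank operator with the measure $\to$ couple $\to$ scatter structure of Eq.~(\ref{eq:pg-lowrank}). The norm is read as the sup of $\|\mathcal{G}_\Theta(a)-\mathcal{T}(a)\|_{L^q}$ over a bounded set $\mathcal{B}$. We assume $\mathcal{T}(\mathcal{B})$ is precompact, which continuity on bounded sets alone does not give and which we add as a standing compactness condition.

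\textbf{Step 1 (spectral truncation).} In the Mercer/Hilbert--Schmidt case, expand $K(\mathbf{x},\mathbf{x}')=\sum_k \lambda_k \phi_k(\mathbf{x})\phi_k(\mathbf{x}')$ and truncate at rank $m$ so that the Hilbert--Schmidt tail is below $\varepsilon/4$ uniformly on $\mathcal{B}$. This reduces to the second hypothesis, $\mathcal{T}_m(a)=\Phi(\cdot)\,\mathcal{K}\int_\Omega \Phi(\mathbf{x}')^{\top}a(\mathbf{x}')\,d\mathbf{x}'$ with continuous $\Phi:\Omega\to\mathbb{R}^m$. For a nonlinear $\mathcal{T}$, we first compose with a continuous finite-dimensional map $\mathcal{K}:\mathbb{R}^m\to\mathbb{R}^m$, using a projection onto an $\varepsilon$-net of $\mathcal{T}(\mathcal{B})$.

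\textbf{Step 2 (Gaussian trial basis).} By Lemma~\ref{lem:gm-density-main}, each $\Phi_k$ is uniformly approximated by a finite Gaussian combination $\sum_i c_{ki}\exp(-\tfrac12\|(\cdot-\mu_i)/\sigma_i\|^2)$. Take the modal budget $G$ to be the total number of atoms. Choose encoder weights so that $(\mu_{j,i},\sigma_{j,i})$ are constant in $j$, with the $\mathcal{L}_\mu,\mathcal{L}_\sigma$ penalties irrelevant for existence. Take uniform softmax weights $w_{j,i}=1/G$ and absorb the factor $G$ into $W_z$. Then $\mathbf{z}_j$ is the vector of atom values at $\tilde{\mathbf{x}}_j$, and a linear map $W_z$ reproduces $\Phi(\tilde{\mathbf{x}}_j)$ up to $\varepsilon$. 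The input $a_j$ has to enter the tokens as well. We let the encoder weights $w_{j,i}$, or an extra channel of $\xi_j$, carry $a_j$ multiplicatively, so that $s_j\approx \Phi(\tilde{\mathbf{x}}_j)a_j$. This needs an MLP approximation of the product on a compact range, which is valid because $\mathcal{B}$ is bounded; for $L^p$ data we first mollify.

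\textbf{Step 3 (PG measurement, coupling, scatter).} The column-normalized windows $\bar p_{j,g}$ in Eq.~(\ref{eq:attn-agg}) turn $\sum_j \bar p_{j,g}s_j$ into a quadrature for $\int \Phi^{\top}a$, provided the windows are nearly uniform in $j$ with quadrature weights. We obtain this by choosing $W_p$ so that the logits are nearly constant. For the $G\times G$ coupling, we set $W_Q=W_K=0$ so that $\alpha$ is uniform and put the learned matrix into $W_V$, or use several heads to realize a general linear $\mathcal{K}$. The scatter step with row-stochastic $p_{j,g}$, followed by $W_{\mathrm{out}}$ and the decoder MLP $f^{\mathrm{dec}}_\phi$, multiplies back by $\Phi(\tilde{\mathbf{x}}_j)$ through a second product approximation by the decoder. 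We set $\lambda=1$ and take $\varepsilon$ in Eq.~(\ref{eq:row-renorm}) small.

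\textbf{Main obstacle.} The hardest part is the restricted architecture. With uniform windows, the scatter $\sum_g p_{j,g}U_g$ loses spatial dependence, so the spatial factor $\Phi(\tilde{\mathbf{x}}_j)$ must be reintroduced through a skip path, and the row renormalization Eq.~(\ref{eq:row-renorm}) constrains that path. We would handle this with $\lambda<1$: the residual $(1-\lambda)Z$ carries $\Phi$, and the decoder separates the two parts, using positivity of $Z$ and an offset to keep the denominator away from zero. Finally we sum the four $\varepsilon/4$ errors — truncation, Gaussian density, quadrature, and MLP products — with each bound uniform over $\mathcal{B}$ by boundedness and precompactness.
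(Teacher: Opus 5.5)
\textbf{There is a genuine gap: your compactness assumption is placed on the wrong object.} As a result, the closing claim that all four errors are uniform over $\mathcal{B}$ ``by boundedness and precompactness'' fails. After the truncation, every step you use is either sitewise or a Riemann sum. The softmax weights $w_{j,i}\in(0,1)$ can be made affine in $a_j$ only over a bounded range of values. The decoder MLP is accurate only on a compact set of inputs. And $\sum_j\bar p_{j,g}\mathbf{s}_j$ converges to the integral uniformly in $a$ only over a uniformly bounded, equicontinuous family.

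An $L^p$-bounded $\mathcal{B}$ gives none of this; the point values $a(\mathbf{x}_j)$ are not even defined. Precompactness of $\mathcal{T}(\mathcal{B})$ does not help. Take the rank-one operator $\mathcal{T}a=\lambda_1\Phi_1\int_\Omega\Phi_1 a$ with continuous $\Phi_1\neq0$: its image is finite-dimensional. Yet for any fixed context points and any $\Theta$ there is a continuous $a\in\mathcal{B}$ that vanishes at every $\mathbf{x}_j$ while $\int_\Omega\Phi_1 a$ is of order one. Then $\mathcal{G}_\Theta(a)=\mathcal{G}_\Theta(0)$ while $\mathcal{T}a\neq\mathcal{T}0$, so one of the two errors is at least $\tfrac12\|\mathcal{T}a\|_{L^q}$.

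``First mollify'' does not repair this. Mollification is nonlocal, whereas $E_\theta$ acts site by site, so it is not an operation of $\mathcal{G}_\Theta$. What you need is compactness of the \emph{input} set in $C(\Omega;\mathbb{R}^{c_{\mathrm{in}}})$, i.e.\ uniformly bounded and equicontinuous. Then the range of $a_j$ is compact, the quadrature error is uniform, and your steps go through.

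Your nonlinear remark needs the same input-side compactness. A precompact image does not let $\mathcal{T}$ factor through finitely many measurements; consider $a\mapsto\|a\|_{L^p}$, viewed as a constant function, on the unit ball. The paper's own sketch has the same defect, since it asserts its bounds ``for any $f$ in a bounded set''. So this is a correction to the hypothesis, not only to your argument.

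Two smaller steps also need to be made explicit.
\begin{itemize}
\item Softmax weights are positive and sum to one, so carrying $a_j$ multiplicatively means, e.g., duplicating each atom with weights proportional to $1\pm\beta a_j$ and giving the copies opposite-sign rows of $W_z$. This is exactly where the bounded range of $a_j$ is used. The ``extra channel of $\xi_j$'' option does not work, because $\xi_j$ only feeds the windows, not $\mathbf{s}_j=\mathbf{z}_jW_z$.
\item The update $\widehat Z_{j,:}=(1-\lambda)Z_{j,:}+\lambda\widetilde{\mathbf{z}}_j$ superimposes the two signals coordinatewise. The decoder can separate them only if you reserve coordinates on which $Z_{j,g}\approx0$ (atoms centered far outside $\Omega$, or with negligible weight) and let $W_{\mathrm{out}}$ write the modal coefficients only there. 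The row renormalization then multiplies by a positive scalar that is a continuous function of $Z'_{j,:}$, once $\lambda\in(0,1)$ and your offset are fixed.
\end{itemize}

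With these repairs, your route genuinely differs from the paper's. The paper uses non-uniform windows as test functions and takes $\alpha\approx I_G$. It puts $\mathcal{K}_m$ into $W_VW_{\mathrm{out}}$, which acts on the feature index rather than across modes. Its tokens are built from $\Psi(\mathbf{x}_j)$ alone, and it simply asserts that scatter plus decoder restore the output features.

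Your uniform windows and uniform attention make every token the same global quadrature of $\Phi^{\top}a$, so the coupling lives entirely in $W_V$. The residual path is what returns $\Phi(\mathbf{x}_j)$ to the pointwise decoder. That is exactly the obstacle you correctly flagged, and the paper's argument does not address it.
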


\subsection{Gaussian Particle Operator: Overall Framework}
\label{sec:overall}

\subsubsection{Neural Operator Formulation}
Let $\Omega\!\subset\!\mathbb{R}^{d}$ be the domain, $a:\Omega\!\to\!\mathbb{R}^{c_{\text{in}}}$ the input
field, and $u:\Omega\!\to\!\mathbb{R}^{c_{\text{out}}}$ the target field. We model the map
$a\mapsto u$ by a neural operator
\begin{equation}
\label{eq:op-def}
\mathcal{G}_{\Theta}\;=\; f^{\mathrm{dec}}_{\phi}\ \circ\ 
\Big(\mathcal{O}_{\psi}\Big)^{\!\circ n}\ \circ\ 
\mathcal{Z}\big(\,\cdot\;;\,\Pi_\theta(\cdot)\big)\ \circ\ E_{\theta},
\end{equation}
where:
\begin{itemize}\setlength{\itemsep}{2pt}
\item $E_{\theta}$ (encoder) predicts Gaussian particles
$\Pi_{\theta}(a)=\big(\mu_{\theta},\sigma_{\theta},w_{\theta}\big)$ on the \emph{context} locations $\{x_j\}_{j=1}^{N}$
(using both $a_j$ and $x_j$);
\item $\mathcal{Z}(\cdot;\Pi)$ evaluates the \emph{Gaussian basis} and returns per-location,
$G$-dimensional coefficients $Z\in\mathbb{R}^{N\times G}$ with
\begin{equation}
\label{eq:Z-eval}
z_{j,i}\;=\;w_{j,i}\,\exp\!\Big(-\tfrac{1}{2}\big\|(\mathbf{x}_j-\mu_{j,i})/\sigma_{j,i}\big\|_2^{2}\Big);
\end{equation}
\item $\mathcal{O}_{\psi}$ is the \emph{Gaussian-basis operator} (Sec.~\ref{sec:pg-attn}) acting on
$Z$ and parameterized by PG Gaussian Attention; it can be applied $n$ times:
\begin{align}
    \label{eq:O-iter}
&Z^{(0)}=Z,\\
&Z^{(k+1)}=\mathcal{O}_{\psi}\big(Z^{(k)};\Pi_{\theta}(a)\big), 
\end{align}
\item $f^{\mathrm{dec}}_{\phi}$ (decoder) maps the updated basis to the output field values:
$\ \hat{u}(\mathbf{x}_j)=f^{\mathrm{dec}}_{\phi}\!\big(Z^{(n)}_{j,:}\big)$.
\end{itemize}

The construction is resolution-agnostic: for any query set $\{\tilde x_m\}_{m=1}^{M}$ (2D/3D grids or irregular meshes),
we keep the particle field $\Pi_{\theta}(a)$ predicted on the context locations fixed and simply re-evaluate
$z_{j,i}(\tilde x_m)=w_{j,i}\,\exp\!\big(-\tfrac12\|(\tilde x_m-\mu_{j,i})/\sigma_{j,i}\|_2^2\big)$,
followed by the same $\mathcal{O}_{\psi}$ and $f^{\mathrm{dec}}_{\phi}$.

\subsubsection{Pipeline overview}
As shown in Fig.~\ref{fig:framework}, given an input field $a(\mathbf{x})$, the encoder $E_{\theta}$ produces per-site Gaussian particles $\Pi_{\theta}(a)=(\mu,\sigma,w)$, i.e., a \emph{Gaussian field}. At query locations $\{\mathbf{x}_j\}_{j=1}^{N}$, we then evaluate the Gaussian basis by Eq.(\ref{eq:Z-eval}) to obtain $Z\in\mathbb{R}^{N\times G}$. The Gaussian-basis operator $\mathcal{O}_{\psi}$ acts on $Z$ in modal space and can be applied for $n$ stages as in Eq.(\ref{eq:O-iter}) to capture multi-step coupling, yielding $Z^{(n)}$. Finally, the decoder $f^{\mathrm{dec}}_{\phi}$ maps $Z^{(n)}_{j,:}$ to $\hat{u}(\mathbf{x}_j)$. During training, the target $u(\mathbf{x})$ may also be encoded by $E_{\theta}$ to provide an auxiliary Gaussian-field supervision signal.

\section{Experiments}
\begin{figure*}[t]
    \centering
    \includegraphics[width=0.85\linewidth]{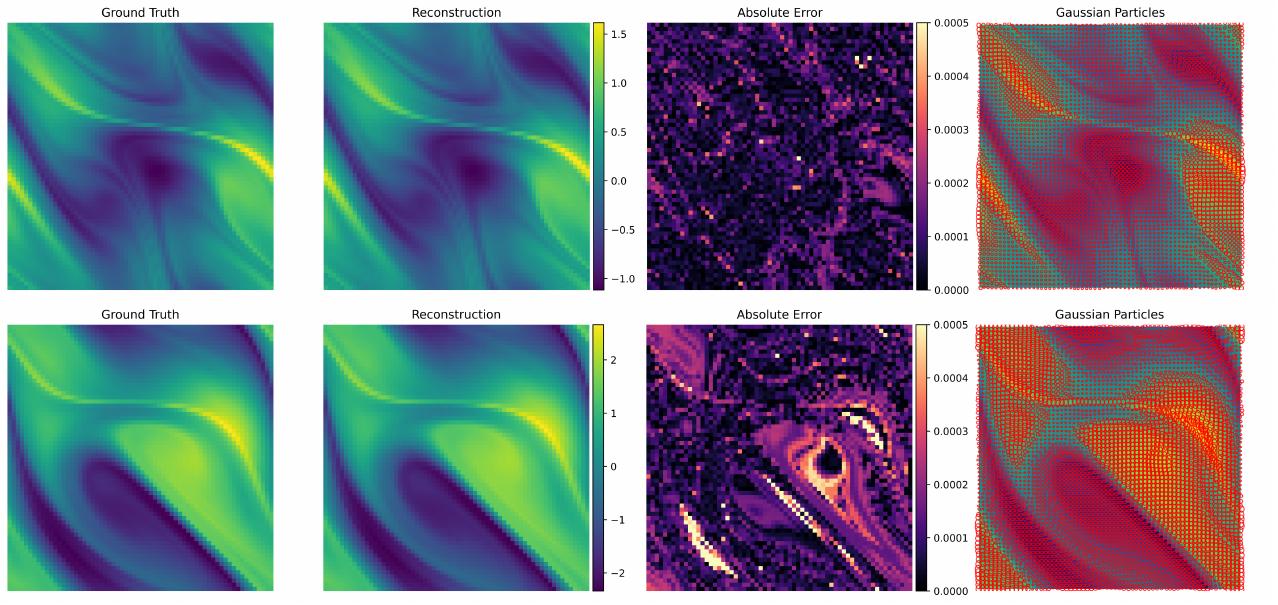}
    \caption{Interpretable visualization on an \textbf{in-distribution sample (above)} and an \textbf{out-of-distribution sample (below)}. Left to right: ground truth, reconstruction from the Gaussian basis, absolute error, and learned Gaussian particles overlaid (ellipses: center $\mu$, axes $\propto\sigma$, color/size $\propto w$). }
    \vspace{-5pt}
    \label{fig:recon}
\end{figure*}
\paragraph{Benchmarks.}
We evaluate on two Navier--Stokes surrogates, three simulated PDE systems, and two real reanalyses to span 2D$\rightarrow$3D, planar$\rightarrow$spherical, and regular$\rightarrow$irregular domains. \textbf{NS2D} \citep{23:no} is an incompressible periodic box sampled on $64{\times}64$; \textbf{NS3D} \citep{22:bench} extends to a periodic cube on $64^3$, stressing 3D scalability. \textbf{Airfoil} \citep{23:FourierNO} considers 2D airfoil flow on irregular geometries (e.g., NACA profiles) to test mesh-/shape-generalization. From \textbf{The Well} \citep{24:TheWell}, \textbf{PlanetSWE} is a forced rotating shallow-water system on the sphere (equiangular $256{\times}512$), and \textbf{\texttt{turbulent\_radiative\_layer\_2D}} is a Kelvin--Helmholtz-driven turbulent mixing layer with radiative cooling on a Cartesian $384{\times}128$ grid. \textbf{ERA5} \citep{23:ERA5} uses one month on the $0.25^\circ$ global grid ($721{\times}1440$), with variables 2\,m temperature ($t$) and 10\,m zonal wind ($u$). \textbf{CARRA} \citep{20:CARRA} uses one Arctic month on its native regional grid ($989{\times}789$) with an irregular land/sea/ice mask, and we retain only 10\,m meridional wind ($v_{10}$). We train one-step operators and assess multi-step rollouts on temporal datasets, using native grids/meshes, and apply latitude/area weighting on spherical domains (ERA5, PlanetSWE) and the native mask for CARRA.

\paragraph{Baselines.}
We compare against a broad set of neural operators spanning spectral, convolutional, multiresolution, graph-based, and attention-based designs. 
We include physics-inspired / structure-aware operators: \textbf{(Geo-)FNO} \citep{21:fno,23:FourierNO} (Fourier neural operator on regular grids and its geometry-aware variant), \textbf{CNO} \citep{23:ConvolutionalNO} (convolutional neural operator), \textbf{LSM} \citep{23:LSM} (learned spectral mixing), \textbf{M2NO} \citep{24:M2NO} (multiresolution neural operator), and \textbf{AMG} \citep{25:HarnessingSP} (multi-graph neural operator for arbitrary geometries). 
We also compare with Transformer-based operators that parameterize the operator via attention (global coupling): \textbf{Galerkin Transformer} \citep{21:GT}, \textbf{GNOT} \citep{23:GNOT}, \textbf{ONO} \citep{23:ONO}, \textbf{LNO} \citep{24:LatentNO}, and \textbf{Transolver} \citep{24:Transolver}. 
All baselines use identical data splits, losses, and rollout protocols, and we keep model capacity comparable; 2D/3D/irregular variants are used where applicable.

\paragraph{Implementations.}
We evaluate all models using the relative $L_{2}$ error on held-out sets. Inputs/targets are \emph{normalized per variable} using training statistics; models are trained on the normalized data, and \emph{all metrics are computed after inverse normalization}. Training uses AdamW \citep{19:adamw} with an initial learning rate of $10^{-3}$ and a StepLR scheduler that reduces the learning rate at fixed intervals. All experiments are run on a single NVIDIA RTX~4090 GPU. Per-dataset and per-baseline hyperparameters are provided in Appx.\ref{sec:implementation}.

\begin{figure*}[ht]
    \centering
    \includegraphics[width=\linewidth]{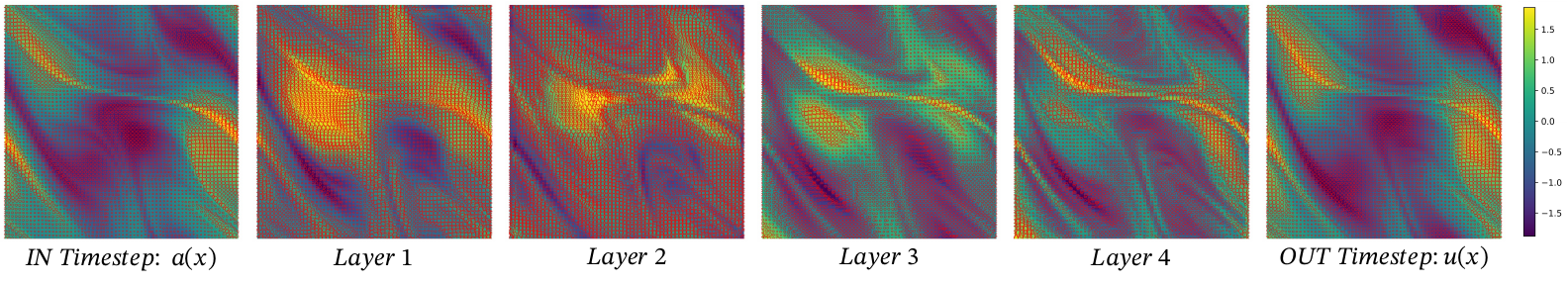}
    \vspace{-20pt}
    \caption{\textbf{Layer-wise evolution of the Gaussian particle field.}
    Particle activations after successive PG Gaussian Attention layers.}
    \label{fig:layerwise-particles}
\end{figure*}

\subsection{Benchmark Performance}
Table~\ref{table:all} reports $L_2$ errors across synthetic Navier--Stokes surrogates, real reanalyses, and additional PDE systems spanning irregular geometries (Airfoil), Cartesian turbulence with radiative effects, and spherical dynamics (PlanetSWE). Overall, the results are consistent with the inductive biases of each model family while demonstrating that GPO is a strong and stable performer across all regimes.

(i) \textbf{Regular-grid fluid surrogates (NS2D/NS3D).} Spectral or multiresolution priors are advantageous on clean periodic grids: Fourier- and multiscale-based baselines are consistently strong, reflecting their efficiency at representing globally coherent modes. GPO matches or surpasses these methods on both 2D and 3D Navier--Stokes, indicating that its Gaussian particle basis can recover the benefits of modal representations without committing to a fixed Fourier parameterization. This supports our motivation that \emph{learned localized atoms} coupled through a \emph{Petrov--Galerkin measurement} can serve as an effective alternative to hand-crafted spectral kernels, while remaining applicable beyond periodic grids.

(ii) \textbf{Large-scale geophysical reanalyses (ERA5/CARRA).} On global and masked regional grids, performance favors models that combine (a) non-Euclidean or mask-aware handling and (b) sufficient long-range coupling. Purely spectral designs degrade under spherical weighting and irregular masks, while attention-style operators improve by modeling global dependencies but may incur unfavorable scaling. GPO achieves the best overall accuracy on these reanalyses, consistent with our contributions: the \emph{Gaussian particle representation} provides localized, interpretable support that adapts naturally to masked domains, and \emph{PG Gaussian Attention} enables global interaction in a modal space, reducing sensitivity to grid irregularity and avoiding quadratic dependence on the number of spatial samples.

(iii) \textbf{Geometry generalization and heterogeneous physics (Airfoil / Turbulent / PlanetSWE).} The additional benchmarks highlight transfer across geometry and operator classes. On \textbf{Airfoil}, which stresses irregular boundaries and shape variation, GPO outperforms geometry-aware spectral and attention baselines, suggesting that particle-based latent states are well-suited to representing boundary-localized structures. On \textbf{Turbulent}, where sharp gradients and multi-scale mixing are dominant, GPO is consistently better than both convolutional and transformer baselines, aligning with our claim that localized Gaussian atoms capture high-frequency structures more effectively than globally parameterized kernels. On \textbf{PlanetSWE}, which requires global coupling on the sphere, GPO also leads, indicating that PG-attention provides sufficient long-range interaction while maintaining robustness across coordinate systems.

\noindent\textbf{Summary.} Across all benchmarks, GPO is competitive on regular synthetic settings and shows clear advantages on irregular, masked, and non-Cartesian domains. This empirical pattern aligns with our central thesis: representing PDE states as \emph{compact, localized, and interpretable Gaussian particles} and evolving them with \emph{modal Petrov--Galerkin attention} yields a general-purpose operator that remains accurate under changing geometry, resolution, and physics.

\subsection{Interpretable Diagnostics}
\label{sec:interpretable}

\subsubsection{Gaussian Reconstruction}
\paragraph{Setup.}
Before evaluating the operator, we validate that the learned \emph{Gaussian particle basis} (Sec.~\ref{sec:gaussian_basis}) is \emph{faithful} and \emph{interpretable}. We train the encoder--decoder to reconstruct the field using weighted Gaussian evaluation, together with particle regularizers (center alignment and a scale-range barrier). For both \textbf{in-distribution (ID)} and \textbf{out-of-distribution (OOD)} samples, we visualize the ground truth, reconstruction, absolute error, and particle overlays (ellipses: center $=\mu$, axes $\propto\sigma$, color/size $\propto w$).

\paragraph{Observations.}
Fig.~\ref{fig:recon} shows that (i) particles consistently concentrate on coherent structures (fronts, filaments, vortices), with \emph{anisotropic} footprints aligned to local flow directions; (ii) reconstruction errors remain localized near sharp gradients and subgrid-scale filaments; and (iii) under OOD shifts, particle geometry and weights adapt smoothly, preserving large- and meso-scale structures with only moderate degradation. This supports that Gaussian particles form a \emph{visualizable, mesh-agnostic} state representation that remains stable beyond the training distribution, and provides an explicit trial space for subsequent operator updates.

\subsubsection{Layer-wise Particle Dynamics}
\label{sec:layerwise-gaussian}
\paragraph{Setup.}
During prediction, we apply the Gaussian Particle Operator (Sec.~\ref{sec:pg-attn}) for $n$ stages. The encoder-fixed particle geometry $(\mu,\sigma)$ defines trial atoms, while PG Gaussian Attention updates per-site coefficients $z^{(k)}_j \in \mathbb{R}^{G}$ at stage $k$. We visualize layer-wise particle fields by overlaying particle footprints and plotting a scalar activation summary (e.g., $A^{(k)}(x_j)=\sum_{g=1}^{G} z^{(k)}_{j,g}$), making the redistribution of modal energy directly observable.

\paragraph{Interpretation.}
In Fig.~\ref{fig:layerwise-particles}, the evolution of activation maps admits a mechanistic reading: (i) local homogenization resembles diffusion among nearby modes; (ii) directional transfers reflect advection-like transport aligned with anisotropic particle geometry; and (iii) localized growth/decay indicates cross-scale energy exchange. Comparing the input $a(x)$ and target $u(x)$, the progressive adjustment from Layer~1$\to$Layer~$n$ tracks the emergence and transport of the fine-scale structures that differentiate $u$ from $a$. This supports our second contribution: PG-attention yields \emph{interpretable global coupling} operating in a compact modal space.

\definecolor{vir0}{RGB}{068,004,090}
\definecolor{vir1}{RGB}{065,062,133}
\definecolor{vir2}{RGB}{048,104,141}
\definecolor{vir3}{RGB}{031,146,139}
\definecolor{vir4}{RGB}{053,183,119}
\definecolor{vir5}{RGB}{145,213,066}
\definecolor{vir6}{RGB}{248,230,032}

\pgfplotsset{compat=1.18}

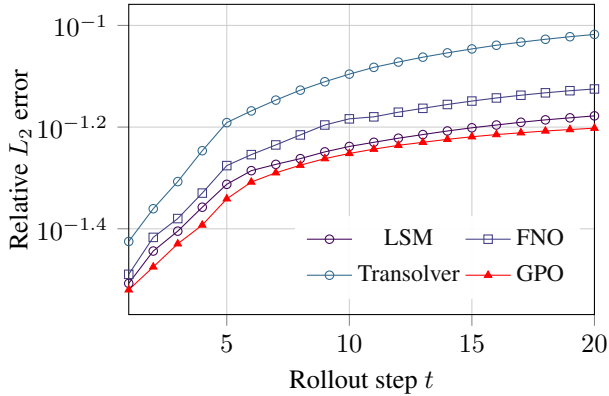
\begin{figure}[t]
  \centering
  \begin{tikzpicture}
    \begin{axis}[
      width=0.75\linewidth,
      height=0.5\linewidth,
      scale only axis,
      xlabel={Rollout step $t$},
      ylabel={Relative $L_2$ error},
      ylabel style={yshift=-0.5em},
      xmin=1, xmax=20,
      ymode=log,
      ymin=2.7e-2, ymax=1.1e-1,
      grid=both,
      grid style={line width=0.1pt, draw=gray!30},
      major grid style={line width=0.2pt, draw=gray!40},
      tick align=outside,
      tick pos=left,
      mark size=1.6pt,
      legend style={
        at={(0.97,0.05)},
        anchor=south east,
        draw=none,
        fill=white,
        fill opacity=0.95,
        font=\small,
        row sep=2pt,
        inner xsep=4pt,
        inner ysep=3pt,
        legend columns=2,
      },
    ]
    
      \addplot+[mark=o, color=vir0] coordinates {
        (1,3.11E-02) (2,3.60E-02) (3,3.94E-02) (4,4.39E-02) (5,4.87E-02)
        (6,5.18E-02) (7,5.33E-02) (8,5.47E-02) (9,5.64E-02) (10,5.78E-02)
        (11,5.89E-02) (12,6.00E-02) (13,6.10E-02) (14,6.20E-02) (15,6.29E-02)
        (16,6.37E-02) (17,6.45E-02) (18,6.52E-02) (19,6.58E-02) (20,6.64E-02)
      };
      \addlegendentry{LSM}

      \addplot+[mark=square, color=vir1] coordinates {
        (1,3.24E-02) (2,3.83E-02) (3,4.17E-02) (4,4.68E-02) (5,5.30E-02)
        (6,5.57E-02) (7,5.82E-02) (8,6.09E-02) (9,6.37E-02) (10,6.55E-02)
        (11,6.61E-02) (12,6.75E-02) (13,6.87E-02) (14,6.99E-02) (15,7.10E-02)
        (16,7.20E-02) (17,7.29E-02) (18,7.37E-02) (19,7.44E-02) (20,7.50E-02)
      };
      \addlegendentry{FNO}

      \addplot+[mark=o, color=vir2] coordinates {
        (1,3.76E-02) (2,4.36E-02) (3,4.93E-02) (4,5.67E-02) (5,6.44E-02)
        (6,6.79E-02) (7,7.13E-02) (8,7.46E-02) (9,7.75E-02) (10,8.02E-02)
        (11,8.27E-02) (12,8.47E-02) (13,8.66E-02) (14,8.83E-02) (15,8.99E-02)
        (16,9.14E-02) (17,9.27E-02) (18,9.39E-02) (19,9.50E-02) (20,9.60E-02)
      };
      \addlegendentry{Transolver}

      \addplot+[mark=triangle*, color=red] coordinates {
        (1,3.02E-02) (2,3.35E-02) (3,3.72E-02) (4,4.05E-02) (5,4.56E-02)
        (6,4.92E-02) (7,5.13E-02) (8,5.31E-02) (9,5.47E-02) (10,5.60E-02)
        (11,5.71E-02) (12,5.81E-02) (13,5.89E-02) (14,5.97E-02) (15,6.04E-02)
        (16,6.10E-02) (17,6.15E-02) (18,6.20E-02) (19,6.24E-02) (20,6.28E-02)
      };
      \addlegendentry{GPO}

    \end{axis}
  \end{tikzpicture}
  \vspace{-20pt}
  \caption{Rollout stability schematic: error versus rollout horizon.}
  \label{fig:rollout_schematic}
\end{figure}

\subsubsection{Rollout Stability}
\paragraph{Setup.}
To probe long-horizon behavior, we perform autoregressive rollouts and report the relative $L_2$ error as a function of rollout step $t$ (Figure~\ref{fig:rollout_schematic}). We compare representative spectral and attention baselines against GPO under identical rollout protocols.

\paragraph{Observations.}
All methods exhibit error accumulation over time, but GPO shows consistently slower growth and reduced drift across the horizon. This behavior is consistent with our design: the particle geometry anchors the representation to localized structures, while PG measurement and modal coupling mitigate compounding aliasing and stabilize global interactions without directly attending over all spatial points. Practically, this indicates that the learned dynamics remain coherent under repeated composition, which is critical for real forecasting workloads.

\subsubsection{Spectral Fidelity}
\paragraph{Setup.}
Beyond pointwise errors, we assess physical plausibility via the kinetic energy spectrum $E(k)$ on NS2D (Fig.~\ref{fig:ns2d_energy_spectrum}), computed from Fourier modes and shown on a log--log scale with a reference inertial-range slope.

\paragraph{Observations.}
Compared to baselines, GPO tracks the ground-truth spectrum more closely across both low and intermediate wavenumbers, and exhibits a delayed high-$k$ roll-off, indicating less over-smoothing of small-scale content. In particular, spectral baselines tend to lose energy early in the high-frequency range, while attention models partially recover mid-range structure but still attenuate high-$k$ modes. The improved spectral alignment of GPO supports our central thesis: representing states with localized Gaussian particles and evolving them through PG-attention better preserves multi-scale structure, yielding dynamics that are not only accurate but also physically consistent. Additional visualizations are provided in Appx.~\ref{sec:vis}.

\subsection{Model Analysis}
\textbf{Ablations} (Table~\ref{table:ablation}) confirm that the \emph{Gaussian Field} is essential, while even a single Gaussian per site improves over MLP baselines. The best results arise from combining the Gaussian Field with \emph{PG Gaussian Attention}. Increasing the modal budget $G$ generally helps but exhibits diminishing returns; we therefore adopt a moderate $G$ for a balanced accuracy–efficiency trade-off. 

\textbf{Complexity} (Table~\ref{table:efficiency}) shows that GPO maintains low memory use and competitive runtime, scaling near-linearly with the number of query points (unlike spatial self-attention), and offering favorable trade-offs across 2D/3D and irregular domains. See Appx.\ref{sec:analysis} for full ablations and analyses.

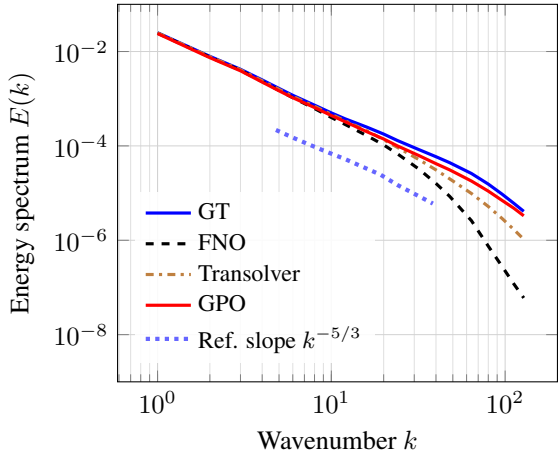
\begin{figure}[t]
  \centering
  \begin{tikzpicture}
    \begin{axis}[
      width=0.9\linewidth,
      height=0.8\linewidth,
      xmode=log,
      ymode=log,
      xlabel={Wavenumber $k$},
      ylabel={Energy spectrum $E(k)$},
      xmin=0, xmax=200,          
      ymin=1e-9, ymax=1e-1,
      grid=both,
      grid style={line width=0.1pt, draw=gray!25},
      major grid style={line width=0.2pt, draw=gray!35},
      legend style={
        at={(0.04,0.05)},
        anchor=south west,
        draw=none,
        fill=white,
        fill opacity=0.9,
        text opacity=1,
        font=\small,
        legend columns=1,
      },
      legend cell align=left,
      mark size=1.5pt,
      set layers,
    ]

      \addplot+[mark=none, very thick, color=blue] coordinates {
        (1,2.50e-2) (2,7.90e-3) (3,4.20e-3) (4,2.50e-3) (5,1.65e-3)
        (6,1.18e-3) (8,7.30e-4) (10,5.00e-4) (12,3.75e-4) (16,2.50e-4)
        (20,1.78e-4) (24,1.30e-4) (32,8.30e-5) (40,5.90e-5) (48,4.40e-5)
        (64,2.60e-5) (80,1.55e-5) (96,9.50e-6) (112,6.10e-6) (128,4.10e-6)
      };
      \addlegendentry{GT}

      \addplot+[mark=none, very thick, color=black, dashed] coordinates {
        (1,2.45e-2) (2,7.60e-3) (3,4.00e-3) (4,2.30e-3) (5,1.50e-3)
        (6,1.05e-3) (8,6.20e-4) (10,4.00e-4) (12,2.85e-4) (16,1.65e-4)
        (20,1.05e-4) (24,6.80e-5) (32,3.20e-5) (40,1.60e-5) (48,8.50e-6)
        (64,2.60e-6) (80,7.50e-7) (96,2.80e-7) (112,1.20e-7) (128,6.00e-8)
      };
      \addlegendentry{FNO}

      \addplot+[mark=none, very thick, color=brown, dashdotted] coordinates {
        (1,2.48e-2) (2,7.80e-3) (3,4.10e-3) (4,2.40e-3) (5,1.58e-3)
        (6,1.12e-3) (8,6.90e-4) (10,4.65e-4) (12,3.40e-4) (16,2.10e-4)
        (20,1.35e-4) (24,9.30e-5) (32,5.10e-5) (40,3.10e-5) (48,2.00e-5)
        (64,9.80e-6) (80,5.20e-6) (96,2.90e-6) (112,1.70e-6) (128,1.05e-6)
      };
      \addlegendentry{Transolver}

      \addplot+[mark=none, very thick, color=red] coordinates {
        (1,2.40e-2) (2,7.40e-3) (3,3.90e-3) (4,2.25e-3) (5,1.47e-3)
        (6,1.05e-3) (8,6.50e-4) (10,4.40e-4) (12,3.25e-4) (16,2.00e-4)
        (20,1.40e-4) (24,1.00e-4) (32,6.20e-5) (40,4.20e-5) (48,3.10e-5)
        (64,1.80e-5) (80,1.10e-5) (96,7.00e-6) (112,4.80e-6) (128,3.30e-6)
      };
      \addlegendentry{GPO}

      \addplot+[mark=none, ultra thick, color=blue!60, dotted] coordinates {
      (4.8,2.175e-4) (6.0,1.50e-4) (7.2,1.125e-4) (9.6,7.25e-5)
      (12.0,5.25e-5) (14.4,3.90e-5) (19.2,2.45e-5) (24.0,1.485e-5) (38.4,6.00e-6)
    };
    \addlegendentry{Ref. slope $k^{-5/3}$}

    \end{axis}
  \end{tikzpicture}
  \vspace{-5pt}
  \caption{NS2D energy spectrum (log--log).}
  \vspace{-10pt}
  \label{fig:ns2d_energy_spectrum}
\end{figure}

\section{Conclusion}

We introduced the \emph{Gaussian Particle Operator} (GPO), a resolution-agnostic neural operator that represents fields with an interpretable \emph{Gaussian particle basis} and performs basis-to-basis updates via \emph{Petrov--Galerkin Gaussian Attention}. The pipeline makes intermediate objects—particles $(\mu,\sigma,w)$, modal windows, and inter-modal couplings—directly visualizable, while remaining robust across regular grids, irregular masks, and 2D$\rightarrow$3D settings. Empirically, GPO is consistently strong on large-scale geophysical datasets, with improved spectral behavior and stable rollouts.

\subsection*{Limitations and future work}

\textbf{Scalability.} Accuracy and cost depend on the modal budget $G$ and head width. While the $N\!\leftrightarrow\!G$ transfers are linear in $N$, storing and applying $NG$ windows can become a bottleneck at extreme resolutions. Promising directions include adaptive particles, sparse routing/pruning, structured or low-rank coupling in modal space.

\textbf{Physics integration.} Training is largely data-driven with lightweight regularization, and does not enforce invariants or hard constraints (e.g., divergence-free flow or boundary conditions). Future work will integrate PDE structure into the particle basis and PG updates, aiming for operators that are both accurate and more \emph{mechanistically} interpretable.


\bibliography{example_paper}

@book{12:fem,
    author = {Claes Johnson},
    title = {Numerical Solution of Partial Differential Equations by the Finite Element Method},
    publisher = {Courier Corporation, North Chelmsford},
    year = {2012}
}

@article{84:diff,
 ISSN = {00361399},
 URL = {http://www.jstor.org/stable/2101307},
 author = {Gene A. Klaasen and William C. Troy},
 journal = {SIAM Journal on Applied Mathematics},
 number = {1},
 pages = {96--110},
 publisher = {Society for Industrial and Applied Mathematics},
 title = {Stationary Wave Solutions of a System of Reaction-Diffusion Equations Derived from the Fitzhugh-Nagumo Equations},
 volume = {44},
 year = {1984}
}

@article{91:UniversalAU,
    author = {Park, J. and Sandberg, I. W.},
    title = {Universal Approximation Using Radial-Basis-Function Networks},
    journal = {Neural Computation},
    volume = {3},
    number = {2},
    pages = {246-257},
    year = {1991},
    month = {06},
    issn = {0899-7667},
    doi = {10.1162/neco.1991.3.2.246},
    url = {https://doi.org/10.1162/neco.1991.3.2.246},
}

@article{00:RadialBF,
  title={Radial basis functions},
  author={Buhmann, Martin Dietrich},
  journal={Acta numerica},
  volume={9},
  pages={1--38},
  year={2000},
  publisher={Cambridge university press}
}

@article{82:StreamlinePG,
title = {Streamline upwind/Petrov-Galerkin formulations for convection dominated flows with particular emphasis on the incompressible Navier-Stokes equations},
author = {Alexander N. Brooks and Thomas J.R. Hughes},
journal = {Computer Methods in Applied Mechanics and Engineering},
volume = {32},
number = {1},
pages = {199-259},
year = {1982},
issn = {0045-7825},
doi = {https://doi.org/10.1016/0045-7825(82)90071-8},
}

@article{06:RevisitingSF,
title = {Revisiting stabilized finite element methods for the advective–diffusive equation},
author = {Leopoldo P. Franca and Guillermo Hauke and Arif Masud},
journal = {Computer Methods in Applied Mechanics and Engineering},
volume = {195},
number = {13},
pages = {1560-1572},
year = {2006},
issn = {0045-7825},
doi = {https://doi.org/10.1016/j.cma.2005.05.028},
url = {https://www.sciencedirect.com/science/article/pii/S0045782505002951},
}

@article{21:DeepONet,
  author       = {Lu Lu and
                  Pengzhan Jin and
                  Guofei Pang and
                  Zhongqiang Zhang and
                  George Em Karniadakis},
  title        = {Learning nonlinear operators via DeepONet based on the universal approximation
                  theorem of operators},
  journal      = {Nat. Mach. Intell.},
  volume       = {3},
  number       = {3},
  pages        = {218--229},
  year         = {2021}
}

@inproceedings{21:GT,
  author       = {Shuhao Cao},
  title        = {Choose a Transformer: Fourier or Galerkin},
  booktitle    = {NeurIPS},
  pages        = {24924--24940},
  year         = {2021}
}

@inproceedings{21:fno,
  author       = {Zongyi Li and
                  Nikola Borislavov Kovachki and
                  Kamyar Azizzadenesheli and
                  Burigede Liu and
                  Kaushik Bhattacharya and
                  Andrew M. Stuart and
                  Anima Anandkumar},
  title        = {Fourier Neural Operator for Parametric Partial Differential Equations},
  booktitle    = {{ICLR}},
  publisher    = {OpenReview.net},
  year         = {2021}
}

@inproceedings{23:LSM,
  author       = {Haixu Wu and
                  Tengge Hu and
                  Huakun Luo and
                  Jianmin Wang and
                  Mingsheng Long},
  title        = {Solving High-Dimensional PDEs with Latent Spectral Models},
  booktitle    = {{ICML}},
  series       = {Proceedings of Machine Learning Research},
  volume       = {202},
  pages        = {37417--37438},
  publisher    = {{PMLR}},
  year         = {2023}
}

@article{23:no,
  author       = {Nikola B. Kovachki and
                  Zongyi Li and
                  Burigede Liu and
                  Kamyar Azizzadenesheli and
                  Kaushik Bhattacharya and
                  Andrew M. Stuart and
                  Anima Anandkumar},
  title        = {Neural Operator: Learning Maps Between Function Spaces With Applications to PDEs},
  journal      = {J. Mach. Learn. Res.},
  volume       = {24},
  pages        = {89:1--89:97},
  year         = {2023}
}

@article{24:Transolver,
  title={Transolver: A Fast Transformer Solver for PDEs on General Geometries},
  author={Haixu Wu and Huakun Luo and Haowen Wang and Jianmin Wang and Mingsheng Long},
  journal={ArXiv},
  year={2024},
  volume={abs/2402.02366},
}

@inproceedings{25:HarnessingSP,
  author       = {Zhihao Li and
                  Haoze Song and
                  Di Xiao and
                  Zhilu Lai and
                  Wei Wang},
  title        = {Harnessing Scale and Physics: {A} Multi-Graph Neural Operator Framework
                  for PDEs on Arbitrary Geometries},
  booktitle    = {{KDD} {(1)}},
  pages        = {729--740},
  publisher    = {{ACM}},
  year         = {2025}
}

@article{23:ONO,
	title = {Improved {Operator} {Learning} by {Orthogonal} {Attention}},
	author = {Xiao, Zipeng and Hao, Zhongkai and Lin, Bokai and Deng, Zhijie and Su, Hang},
        journal={ArXiv},
	year = {2024},
        volume={abs/2310.12487}
}

@article{23:GNOT,
  title={GNOT: A General Neural Operator Transformer for Operator Learning},
  author={Zhongkai Hao and Chengyang Ying and Zhengyi Wang and Hang Su and Yinpeng Dong and Songming Liu and Ze Cheng and Jun Zhu and Jian Song},
  journal={ArXiv},
  year={2023},
  volume={abs/2302.14376},
}

@book{02:pde,
    author = {Abdul-Majid Wazwaz},
    title = {Partial Differential Equations: Methods and Applications},
    publisher = {Balkema Publishers, Leiden},
    year = {2002}
}

@article{87:turb,
  title = {L\'evy dynamics of enhanced diffusion: Application to turbulence},
  author = {Shlesinger, M. F. and West, B. J. and Klafter, J.},
  journal = {Phys. Rev. Lett.},
  volume = {58},
  issue = {11},
  pages = {1100--1103},
  numpages = {0},
  year = {1987},
  month = {Mar},
  publisher = {American Physical Society},
  doi = {10.1103/PhysRevLett.58.1100},
  url = {https://link.aps.org/doi/10.1103/PhysRevLett.58.1100}
}

@article{20:turb,
author = {Ryan McKeown  and Rodolfo Ostilla-Mónico  and Alain Pumir  and Michael P. Brenner  and Shmuel M. Rubinstein },
title = {Turbulence generation through an iterative cascade of the elliptical instability},
journal = {Science Advances},
volume = {6},
number = {9},
pages = {eaaz2717},
year = {2020},
doi = {10.1126/sciadv.aaz2717},
URL = {https://www.science.org/doi/abs/10.1126/sciadv.aaz2717},
}

@book{82:intro,
    author = {Morton E. Gurtin},
    title = {An Introduction to Continuum Mechanics, Mathematics in Science and Engineering},
    publisher = {Academic Press, Cambridge},
    year = {1982}
}

@inproceedings{23:OperatorLN,
  author       = {Louis Serrano and
                  Lise Le Boudec and
                  Armand Kassa{\"{\i}} Koupa{\"{\i}} and
                  Thomas X. Wang and
                  Yuan Yin and
                  Jean{-}No{\"{e}}l Vittaut and
                  Patrick Gallinari},
  title        = {Operator Learning with Neural Fields: Tackling PDEs on General Geometries},
  booktitle    = {NeurIPS},
  year         = {2023}
}

@inproceedings{24:AROMA,
  author       = {Louis Serrano and
                  Thomas X. Wang and
                  Etienne Le Naour and
                  Jean{-}No{\"{e}}l Vittaut and
                  Patrick Gallinari},
  title        = {{AROMA:} Preserving Spatial Structure for Latent {PDE} Modeling with
                  Local Neural Fields},
  booktitle    = {NeurIPS},
  year         = {2024}
}

@inproceedings{21:mwt,
  author       = {Gaurav Gupta and
                  Xiongye Xiao and
                  Paul Bogdan},
  title        = {Multiwavelet-based Operator Learning for Differential Equations},
  booktitle    = {NeurIPS},
  pages        = {24048--24062},
  year         = {2021}
}

@misc{24:M2NO,
      title={M2NO: An Efficient Multi-Resolution Operator Framework for Dynamic Multi-Scale PDE Solvers}, 
      author={Zhihao Li and Zhilu Lai and Xiaobo Zhang and Wei Wang},
      year={2025},
      eprint={2406.04822},
      archivePrefix={arXiv},
      primaryClass={cs.LG},
      url={https://arxiv.org/abs/2406.04822}, 
}

@article{24:MgNO,
  title={MgNO: Efficient parameterization of linear operators via multigrid},
  author={He, Juncai and Liu, Xinliang and Xu, Jinchao},
  journal={ArXiv},
  year={2024},
  volume={abs/2310.19809}
}

@inproceedings{20:MGNO,
  author       = {Zongyi Li and
                  Nikola B. Kovachki and
                  Kamyar Azizzadenesheli and
                  Burigede Liu and
                  Andrew M. Stuart and
                  Kaushik Bhattacharya and
                  Anima Anandkumar},
  title        = {Multipole Graph Neural Operator for Parametric Partial Differential
                  Equations},
  booktitle    = {NeurIPS},
  year         = {2020}
}

@article{20:GNO,
  title={Neural Operator: Graph Kernel Network for Partial Differential Equations},
  author={Zong-Yi Li and Nikola B. Kovachki and Kamyar Azizzadenesheli and Burigede Liu and Kaushik Bhattacharya and Andrew M. Stuart and Anima Anandkumar},
  journal={ArXiv},
  year={2020},
  volume={abs/2003.03485},
}

@inproceedings{22:bench,
  author       = {Makoto Takamoto and
                  Timothy Praditia and
                  Raphael Leiteritz and
                  Daniel MacKinlay and
                  Francesco Alesiani and
                  Dirk Pfl{\"{u}}ger and
                  Mathias Niepert},
  title        = {PDEBench: An Extensive Benchmark for Scientific Machine Learning},
  booktitle    = {NeurIPS},
  year         = {2022}
}

@misc{23:ERA5,
  author    = {Hersbach, H. and Bell, B. and Berrisford, P. and Biavati, G. and Hor{\'a}nyi, A. and Mu{\~n}oz Sabater, J. and Nicolas, J. and Peubey, C. and Radu, R. and Rozum, I. and Schepers, D. and Simmons, A. and Soci, C. and Dee, D. and Th{\'e}paut, J.-N.},
  title     = {ERA5 hourly data on single levels from 1940 to present},
  year      = {2023},
  publisher = {Copernicus Climate Change Service (C3S) Climate Data Store (CDS)},
  doi       = {10.24381/cds.adbb2d47},
  url       = {https://doi.org/10.24381/cds.adbb2d47},
}

@misc{20:CARRA,
  author = {Schyberg, H. and Yang, X. and K{\o}ltzow, M.~A.~{\O}. and Amstrup, B. and Bakketun, {\AA}. and Bazile, E. and Bojarova, J. and Box, J.~E. and Dahlgren, P. and Hagelin, S. and Homleid, M. and Hor{\'a}nyi, A. and H{\o}yer, J. and Johansson, {\AA}. and Killie, M.~A. and K{\"o}rnich, H. and Le Moigne, P. and Lindskog, M. and Manninen, T. and Nielsen Englyst, P. and Nielsen, K.~P. and Olsson, E. and Palmason, B. and Peralta Aros, C. and Randriamampianina, R. and Samuelsson, P. and Stappers, R. and St{\o}ylen, E. and Thorsteinsson, S. and Valkonen, T. and Wang, Z.~Q.},
  title = {Arctic regional reanalysis on single levels from 1991 to present},
  year = {2020},
  publisher = {Copernicus Climate Change Service (C3S) Climate Data Store (CDS)},
  doi = {10.24381/cds.713858f6},
  url = {https://doi.org/10.24381/cds.713858f6},
}

@inproceedings{19:adamw,
  author       = {Ilya Loshchilov and
                  Frank Hutter},
  title        = {Decoupled Weight Decay Regularization},
  booktitle    = {{ICLR} (Poster)},
  publisher    = {OpenReview.net},
  year         = {2019}
}

@article{23:FourierNO,
  author       = {Zongyi Li and
                  Daniel Zhengyu Huang and
                  Burigede Liu and
                  Anima Anandkumar},
  title        = {Fourier Neural Operator with Learned Deformations for PDEs on General
                  Geometries},
  journal      = {J. Mach. Learn. Res.},
  volume       = {24},
  pages        = {388:1--388:26},
  year         = {2023}
}

@inproceedings{24:TheWell,
  author       = {Ruben Ohana and
                  Michael McCabe and
                  Lucas Meyer and
                  Rudy Morel and
                  Fruzsina J. Agocs and
                  Miguel Beneitez and
                  Marsha Berger and
                  Blakesley Burkhart and
                  Stuart B. Dalziel and
                  Drummond B. Fielding and
                  Daniel Fortunato and
                  Jared A. Goldberg and
                  Keiya Hirashima and
                  Yan{-}Fei Jiang and
                  Rich R. Kerswell and
                  Suryanarayana Maddu and
                  Jonah Miller and
                  Payel Mukhopadhyay and
                  Stefan S. Nixon and
                  Jeff Shen and
                  Romain Watteaux and
                  Bruno R{\'{e}}galdo{-}Saint Blancard and
                  Fran{\c{c}}ois Rozet and
                  Liam Holden Parker and
                  Miles D. Cranmer and
                  Shirley Ho},
  title        = {The Well: a Large-Scale Collection of Diverse Physics Simulations
                  for Machine Learning},
  booktitle    = {NeurIPS},
  year         = {2024}
}

@inproceedings{24:LatentNO,
  author       = {Tian Wang and
                  Chuang Wang},
  title        = {Latent Neural Operator for Solving Forward and Inverse {PDE} Problems},
  booktitle    = {NeurIPS},
  year         = {2024}
}

@inproceedings{23:ConvolutionalNO,
  author       = {Bogdan Raonic and
                  Roberto Molinaro and
                  Tim De Ryck and
                  Tobias Rohner and
                  Francesca Bartolucci and
                  Rima Alaifari and
                  Siddhartha Mishra and
                  Emmanuel de B{\'{e}}zenac},
  title        = {Convolutional Neural Operators for robust and accurate learning of
                  PDEs},
  booktitle    = {NeurIPS},
  year         = {2023}
}
\bibliographystyle{icml2026}

\newpage
\appendix
\onecolumn
\section{Related Work}

\paragraph{Neural operators.}
Classical neural operator methods aim to learn mappings between function spaces directly from data, typically by parameterizing a resolution-agnostic kernel or lifting to a latent space and learning integral transforms. Representative approaches include the \emph{Fourier Neural Operator} (FNO), which performs global convolution via spectral multipliers to approximate operator kernels in Fourier space \citep{21:fno,23:no}, and \emph{DeepONet}, which decomposes an operator into branch/trunk networks to separately encode input functions and query coordinates \citep{21:DeepONet}. Variants extend these ideas with multiresolution bases \citep{20:MGNO,21:mwt,24:MgNO,24:M2NO}, graph or kernelized message passing \citep{25:HarnessingSP}, and learned Green’s functions \citep{20:GNO}.

These models are \emph{purely data driven}: while many designs are physics-inspired, their internal representations are typically opaque. In particular, the learned latent bases and mixing weights are not tied to interpretable physical primitives, which limits diagnostic insight and the ability to attribute predictions to physically meaningful components.

\paragraph{Transformer-based methods.}
A recent line of work adopts Transformers to parameterize neural operators, replacing hand-crafted kernel parameterizations with data-driven attention. Examples include \emph{Galerkin Transformers}, which align attention with variational forms \citep{21:GT}, \emph{GNOT} \citep{23:GNOT} that leverage attention for long-range coupling, \emph{Transolver} \citep{24:Transolver}, which introduces slice-based attention for efficient global mixing, and operator networks that stack attention with physics-informed objectives \citep{23:ONO}. Empirically, with sufficiently large training corpora and careful scaling, attention-based operators often match or surpass traditional neural operators in expressive power and generalization to out-of-distribution forcings and grids.

However, these gains come with two well-known limitations. \textbf{(i) Lack of interpretability:} standard attention weights are not anchored to physically interpretable trial/test functions, making it difficult to ascribe predictions to identifiable modes or localized mechanisms. \textbf{(ii) Frequency bias:} global self-attention tends to emphasize low-rank, global correlations (low-frequency structure), while recovering sharp, localized, or high-frequency phenomena often requires architectural add-ons or extensive data augmentation. As a result, pure Transformer operators provide limited physical attribution and may under-represent fine-scale features without additional inductive biases.

\section{Expressivity of the Gaussian Field and GPO}
\label{sec:expressivity}

\subsection{Expressivity of the Gaussian Field}
\label{sec:exp_gf}

\begin{lemma}[Density of Gaussian mixtures]
\label{lem:gm-density}
On compact $\Omega$, finite mixtures of anisotropic Gaussians are dense in $C(\Omega)$ (and dense in
$L^r(\Omega)$ for $1\!\le\!r\!<\!\infty$). Hence for any continuous scalar field $v$ and $\varepsilon>0$,
there exist $G,\{\mu_i,\sigma_i,w_i\}_{i=1}^G$ such that
$\|v(\cdot)-\sum_{i=1}^G w_i \exp(-\tfrac{1}{2}\|(\cdot-\mu_i)/\sigma_i\|^2)\|_\infty<\varepsilon$.
\end{lemma}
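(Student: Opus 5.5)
The plan is to apply the Stone--Weierstrass theorem directly to the linear span
\[
\mathcal{A}=\mathrm{span}\Big\{\exp\!\big(-\tfrac12\|(\cdot-\mu)/\sigma\|_2^2\big):\ \mu\in\mathbb{R}^d,\ \sigma\in(0,\infty)^d\Big\}\subset C(\Omega),
\]
restricted to the compact set $\Omega$. The coefficients are read as in Lemma~\ref{lem:gm-density-main}, i.e.\ $w_i=c_i\in\mathbb{R}$. They must be allowed to be signed: if the $w_i$ were confined to the simplex, every approximant would be positive, and the claim would fail for, e.g., negative $v$. I will treat the case of real scalar fields. Stone--Weierstrass, in the version that needs no constants, says that a subalgebra of $C(\Omega;\mathbb{R})$ which separates points and vanishes at no point is dense in the sup norm. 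So three properties of $\mathcal{A}$ must be checked.

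\emph{Algebra.} Completing the square coordinatewise shows that the product of two axis-aligned Gaussians is again a scalar multiple of an axis-aligned Gaussian. In coordinate $\ell$ one gets
\[
\frac{(x_\ell-\mu_\ell)^2}{\sigma_\ell^2}+\frac{(x_\ell-\mu'_\ell)^2}{\sigma_\ell'^2}=\frac{(x_\ell-m_\ell)^2}{s_\ell^2}+C_\ell ,
\]
with
\[
s_\ell^{-2}=\sigma_\ell^{-2}+\sigma_\ell'^{-2},\qquad m_\ell=s_\ell^2\big(\mu_\ell\sigma_\ell^{-2}+\mu'_\ell\sigma_\ell'^{-2}\big),
\]
and $C_\ell\ge 0$ a constant. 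By bilinearity, $\mathcal{A}$ is therefore closed under multiplication, and it is clearly a vector space.

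\emph{Nonvanishing and separation.} Every Gaussian is strictly positive, so $\mathcal{A}$ vanishes nowhere. For $x\neq y$ in $\Omega$, the isotropic Gaussian centered at $\mu=x$ takes the value $1$ at $x$ and a value strictly less than $1$ at $y$, so $\mathcal{A}$ separates points.

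Stone--Weierstrass then gives density of $\mathcal{A}$ in $C(\Omega)$ in the sup norm, which is exactly the displayed $\varepsilon$-statement. For $L^r(\Omega)$ with $1\le r<\infty$, two facts finish the argument. First, $C(\Omega)$ is dense in $L^r(\Omega)$ for the finite Lebesgue measure on the compact set $\Omega$, by Lusin's theorem or the regularity of the measure. Second, $\|f\|_{L^r}\le|\Omega|^{1/r}\|f\|_\infty$. Combining these with an $\varepsilon/2$ argument gives the $L^r$ density. Vector-valued fields are handled by approximating each component separately and taking the union of the Gaussian families, with $G$ equal to the total count.

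The step needing the most care is not an analytic difficulty. It is getting the hypotheses right: the coefficients must be signed, and one must use the form of Stone--Weierstrass without constants, since constants do not lie in $\mathcal{A}$. If I wanted to avoid Stone--Weierstrass altogether, the fallback would be a mollification argument. Extend $v$ continuously to a compactly supported function on $\mathbb{R}^d$ by Tietze's theorem. Then $v*\phi_\sigma\to v$ uniformly on $\Omega$, where $\phi_\sigma$ is a normalized Gaussian. Finally, replace the convolution integral by a Riemann sum, which converges uniformly on $\Omega$ by uniform continuity of the integrand; that sum is a finite linear combination of Gaussians with common width $\sigma$.
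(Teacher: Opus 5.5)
Your proof is correct, but it does not follow the paper's route. The paper's proof is constructive and is essentially the fallback you sketch at the end:
\begin{itemize}
\item extend $v$ by Tietze to $\tilde v\in C_c(\mathbb{R}^d)$;
\item use that Gaussian mollification converges uniformly on $\Omega$ as the covariance shrinks, with the normalizing constant absorbed into the weights;
\item replace the convolution integral by a Riemann sum that converges uniformly in $x\in\Omega$. This produces $\sum_i w_i\,\phi_\Sigma(x-\mu_i)$ with $w_i$ proportional to $\tilde v(\mu_i)\Delta_i$.
\end{itemize}

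Your Stone--Weierstrass argument is shorter and needs neither the extension nor approximate-identity estimates. Its only computation is the completing-the-square identity, and your constant checks out: $C_\ell=(\mu_\ell-\mu'_\ell)^2/(\sigma_\ell^2+\sigma_\ell'^2)\ge 0$. You are also right to use the form of the theorem that requires only point separation and nonvanishing, since nonzero constants are not in $\mathcal{A}$.

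The price is that your argument is purely existential. It also genuinely needs the whole range of widths, because a product of two Gaussians is strictly narrower than either factor.

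The paper's construction yields more structure:
\begin{itemize}
\item a single common width suffices, and it may be taken isotropic;
\item the centers are quadrature nodes;
\item the weights are explicit samples of $\tilde v$ times cell volumes;
\item the construction lends itself to quantitative bounds on $G$ via the modulus of continuity of $v$.
\end{itemize}

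Your $L^r$ and vector-valued steps agree with the paper's. You make the bound $\|f\|_{L^r}\le|\Omega|^{1/r}\|f\|_\infty$ explicit, where the paper leaves it implicit.
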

\textit{Sketch.} Standard universal approximation results for radial basis functions/Gaussian kernels.

\begin{proof}
We give a constructive proof based on Gaussian mollification and Riemann sums.

\paragraph{Step 1: Approximate identity via Gaussian mollifiers.}
Let $\Omega\subset\mathbb{R}^d$ be compact and let $v\in C(\Omega)$.
By Tietze’s extension theorem there exists $\tilde v\in C_c(\mathbb{R}^d)$ such that
$\tilde v|_{\Omega}=v$.
For $\Sigma\in\mathbb{R}^{d\times d}$ symmetric positive definite, set the (unnormalized) Gaussian
\[
\phi_{\Sigma}(x) \;=\; \exp\!\Big(-\tfrac{1}{2}\,x^\top\Sigma^{-1}x\Big).
\]
Let $\{\Sigma_\epsilon\}_{\epsilon\downarrow 0}$ be any family with $\|\Sigma_\epsilon\|\to 0$.
Since Gaussians form an approximate identity, the \emph{normalized} mollification
$\tilde v * \phi_{\Sigma_\epsilon} / \int_{\mathbb{R}^d}\phi_{\Sigma_\epsilon}$ converges to $\tilde v$
\emph{uniformly} on compact sets as $\epsilon\downarrow 0$ (uniform continuity of $\tilde v$ and standard approximate-identity properties). Because the normalization constant is a positive scalar depending only on $\Sigma_\epsilon$, we can absorb it into the mixture weights later. Hence, for any $\eta>0$ there exists $\epsilon_0$ such that for all $0<\epsilon\le\epsilon_0$,
\begin{equation}
\label{eq:unif-moll}
\sup_{x\in\Omega}\Big|\,(\tilde v * \phi_{\Sigma_\epsilon})(x) - \tilde v(x)\,\Big| < \tfrac{\eta}{2}.
\end{equation}

\paragraph{Step 2: Riemann-sum approximation of the convolution (finite mixture).}
Fix such an $\epsilon$, write $\Sigma=\Sigma_\epsilon$, and denote the convolution
\[
(\tilde v * \phi_{\Sigma})(x) \;=\; \int_{\mathbb{R}^d} \tilde v(y)\,\phi_{\Sigma}(x-y)\,dy.
\]
Since $\tilde v$ is compactly supported and continuous while $\phi_\Sigma$ is continuous and rapidly decaying, the integrand is continuous with compact support in $y$ uniformly in $x\in\Omega$. Hence Riemann sums approximate the integral uniformly in $x$:
there exists a finite set of nodes $\{\mu_i\}_{i=1}^G\subset\mathbb{R}^d$ with associated positive quadrature weights $\{\Delta_i\}_{i=1}^G$ such that
\begin{equation}
\label{eq:riemann}
\sup_{x\in\Omega}\left|
(\tilde v * \phi_{\Sigma})(x)\;-\;\sum_{i=1}^{G} \tilde v(\mu_i)\,\phi_{\Sigma}(x-\mu_i)\,\Delta_i
\right| < \tfrac{\eta}{2}.
\end{equation}
Define mixture weights $w_i:=\tilde v(\mu_i)\,\Delta_i$ (real-valued; the lemma does not restrict their sign), and note that each term is exactly a (shared-covariance) Gaussian atom
$\exp\!\big(-\tfrac{1}{2}\|(x-\mu_i)\|_{\Sigma^{-1}}^2\big)$, i.e.,
\[
\sum_{i=1}^{G} w_i\,\phi_{\Sigma}(x-\mu_i)
\;=\;
\sum_{i=1}^{G} w_i \exp\!\Big(-\tfrac{1}{2}(x-\mu_i)^\top\Sigma^{-1}(x-\mu_i)\Big).
\]

\paragraph{Step 3: Uniform approximation on $\Omega$.}
Combining \eqref{eq:unif-moll} and \eqref{eq:riemann},
\begin{align*}
    &\sup_{x\in\Omega}\left|\tilde v(x)-\sum_{i=1}^{G} w_i \exp\!\Big(-\tfrac{1}{2}(x-\mu_i)^\top\Sigma^{-1}(x-\mu_i)\Big)\right| \\
\le&\sup_{x\in\Omega}\big|\tilde v(x)-(\tilde v * \phi_{\Sigma})(x)\big|
+\sup_{x\in\Omega}\big|(\tilde v * \phi_{\Sigma})(x)-\textstyle\sum_i w_i\phi_{\Sigma}(x-\mu_i)\big|\\
<& \eta.
\end{align*}

Restricting back to $\Omega$ (where $\tilde v=v$) yields
\[
\big\|v(\cdot)-\sum_{i=1}^G w_i \exp\!\big(-\tfrac{1}{2}\|(\cdot-\mu_i)\|_{\Sigma^{-1}}^2\big)\big\|_{\infty}
< \eta.
\]
Since $\eta>0$ was arbitrary, finite mixtures of (possibly anisotropic) Gaussians are dense in $C(\Omega)$.

\paragraph{Anisotropy and vector-valued extension.}
We used a common covariance $\Sigma$ for clarity; allowing mode-dependent $\Sigma_i$ only increases expressivity, so the same result holds with per-atom anisotropy. For vector-valued $v$, apply the scalar result componentwise.

\paragraph{$L^r$ density.}
Because $C(\Omega)$ is dense in $L^r(\Omega)$ for $1\le r<\infty$ on compact $\Omega$, the uniform approximation implies $L^r$ approximation, completing the proof.
\end{proof}

\subsection{Expressivity of GPO}
\label{sec:exp_gpo}

\begin{theorem}[Universal approximation in modal form]
\label{thm:universality}
Let $\mathcal{T}:\mathcal{X}\to\mathcal{Y}$ be a continuous operator on compacta that admits a
Hilbert--Schmidt (Mercer-type) kernel $K(\mathbf{x},\mathbf{x}')$ or, more generally, a low-rank
factorization $\mathcal{T}\approx \Phi(\cdot)\,\mathcal{K}\,\Phi(\cdot)^\top$ with continuous
features $\Phi:\Omega\to\mathbb{R}^{m}$. Then, for any $\varepsilon>0$, there exist $G$ and network
parameters $\Theta$ such that $\|\mathcal{G}_\Theta-\mathcal{T}\|_{\mathcal{X}\to\mathcal{Y}}
<\varepsilon$. 
\end{theorem}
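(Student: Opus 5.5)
The strategy is a three-stage error split: a finite-rank surrogate of $\mathcal{T}$, then an exact or near-exact realization of that surrogate's modal structure by the PG Gaussian Attention block, then approximation of the remaining continuous pieces by the encoder and decoder using Lemma~\ref{lem:gm-density} and standard MLP universality. Since the statement is imprecise (``$\mathcal{T}\approx\Phi\mathcal{K}\Phi^\top$''), I would first fix an interpretation. $\mathcal{T}$ is continuous on a compact set $\mathcal{C}\subset\mathcal{X}$. For each $\eta>0$ there is a finite-rank operator
\[
\mathcal{T}_m a(\mathbf{x})=\sum_{k,l=1}^{m}\Phi_k(\mathbf{x})\,\mathcal{K}_{kl}\,F_l(a),
\qquad F_l(a)=\int_\Omega \Phi_l(\mathbf{x}')\,a(\mathbf{x}')\,d\mathbf{x}',
\]
or its nonlinear analogue in which $\mathcal{K}$ is replaced by a continuous map $\kappa:\mathbb{R}^m\to\mathbb{R}^m$. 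It satisfies $\sup_{a\in\mathcal{C}}\|\mathcal{T}a-\mathcal{T}_m a\|_{\mathcal{Y}}<\eta$. In the Mercer/Hilbert--Schmidt case this follows by truncating the eigen-expansion $K=\sum_k\lambda_k\Phi_k\otimes\Phi_k$, which converges in Hilbert--Schmidt norm and hence uniformly on the bounded set $\mathcal{C}$. In the general continuous case I would invoke the usual encoder--decoder (PCA/DeepONet-type) universality, which relies on compactness of $\mathcal{C}$. The target error is then split as
\[
\|\mathcal{G}_\Theta-\mathcal{T}\|\le\|\mathcal{T}-\mathcal{T}_m\|+\|\mathcal{T}_m-\mathcal{G}_\Theta\|,
\]
with each term made $<\varepsilon/2$.

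\textbf{Step 2 (realize the measurement $N\to G$).} Take $G\ge m$. The PG measurement \eqref{eq:attn-agg} with column-normalized windows is a discrete quadrature $t_g\approx\int\psi_g\,s$. Choosing the window logits so that $\bar p_{\cdot,g}$ approximates a positive shifted and rescaled copy of $\Phi_g$ (for example $\Phi_g+c$, which is possible because the softmax can represent any strictly positive profile up to normalization), and taking $s_j$ as the projected coefficients $\mathbf{z}_jW_z\approx a(\mathbf{x}_j)$, makes $t_g$ an affine function of $F_g(a)$ up to a controllable quadrature error. The decoder and $W_z$ then require the per-site coefficients $\mathbf{z}_j$ to carry the value $a(\mathbf{x}_j)$ linearly. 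This is where Lemma~\ref{lem:gm-density} enters: the encoder can output Gaussian atoms whose weighted evaluations represent $a$ (and the coordinate features) to within $\eta$ uniformly. The affine offsets introduced by the shift $c$ and by the normalization $\sum_jp_{j,g}$ are known constants on $\mathcal{C}$ and are absorbed into $W_Q,W_K,W_V$ and $W_{\mathrm{out}}$.

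\textbf{Step 3 (realize the coupling and scatter).} The coupling kernel $\mathcal{K}$, or $\kappa$, must be realized by the $G\times G$ attention \eqref{eq:attn-softmax}--\eqref{eq:attn-out}. For a fixed linear $\mathcal{K}$, one option is to set $W_Q=W_K=0$, so that attention becomes uniform, and to encode $\mathcal{K}$ in $W_V$ using extra token channels that carry the mode index. Alternatively, attention over one-hot positional channels with a large temperature can approximate any row-stochastic matrix, and signed entries can then be handled by affine recombination in $W_V,W_{\mathrm{out}}$. A nonlinear $\kappa$ would use stacking $\mathcal{O}_\psi^{\circ n}$ together with the decoder MLP. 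Scatter back to space uses the row-normalized $p_{j,g}$. The spatial features $\Phi_g(\mathbf{x}_j)$ must appear at the output, so I would let the decoder receive $\sum_g p_{j,g}U_g$, in which $p_{j,g}$ is itself a continuous function of $\mathbf{x}_j$, through the coordinate-dependent Gaussian coefficients. I would then appeal to MLP universality for $f^{\mathrm{dec}}_\phi$ to represent $\sum_k\Phi_k(\mathbf{x})(\mathcal{K}F)_k$ as a continuous function of this $G$-dimensional input. The residual blend \eqref{eq:residual} is neutralized by taking $\lambda=1$. The row renormalization \eqref{eq:row-renorm} multiplies each row by a continuous positive scalar, and this scalar can be kept bounded away from zero on $\mathcal{C}$ and undone by the decoder.

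\textbf{Main obstacle.} The hardest step is Step 3, in particular the rigidity of the architecture. The windows are softmax, hence positive and normalized, and the same $p$ is reused for both measuring and scattering. Together with the mass-preserving renormalization, this prevents a literal realization of a signed $\Phi\mathcal{K}\Phi^\top$. My first attempt would use the shift-and-rescale trick (positive windows $\Phi_g+c$, with offsets corrected affinely) combined with letting the decoder MLP absorb all the continuous post-processing. The whole construction would be kept uniform over the compact set $\mathcal{C}$, and Riemann-sum and quadrature errors would be controlled as the discretization is refined. Collecting the errors from Steps 1--3 as $O(\eta)$ and choosing $\eta$ small completes the argument.
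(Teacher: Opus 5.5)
Your outline has the paper's skeleton: finite-rank truncation, then measure $\to$ couple $\to$ scatter, then decoder universality and a triangle inequality. But Step 3, which you rightly flag as the main obstacle, does not go through as written. The failure traces to one confusion. The matrices $W_z,W_Q,W_K,W_V,W_{\mathrm{out}}$ act on \emph{channels}, whereas your Step 2 stores the measurements \emph{across modes}: $F_g(a)$ sits, affinely, in one shared channel of token $g$. Across the mode index, the only available operations are the row-stochastic matrix $\alpha$ and the convex scatter weights $p_{j,\cdot}$. This breaks both of your options.

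\emph{Uniform attention.} With $W_Q=W_K=0$, every output token equals $\frac1G\big(\sum_{g'}T_{g'}\big)W_V$. So only $\sum_g F_g$ survives; appending constant mode-index channels does not help, since they just add up to $\mathbf 1$. Moreover all $U_g$ coincide, so $y_j=\sum_g p_{j,g}U_g$ is the same at every site. With $\lambda=1$, the only site dependence reaching $f^{\mathrm{dec}}_\phi$ is the single scalar row mass reinstated by the renormalization.

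\emph{Positional attention.} This needs mode-identifying channels, which the measurement does not provide unless you synthesize them. Also, because $W_V$ is shared by all tokens, a single head acts on your value channel across modes only through $\beta\alpha$ with $\alpha$ row-stochastic. Signed couplings therefore require several heads combined in $W_{\mathrm{out}}$; ``affine recombination in $W_V,W_{\mathrm{out}}$'' cannot produce them.

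\emph{Step 2.} The same confusion appears here. The shift $c$ puts $c\int_\Omega a$ into every token. That term is not a constant on $\mathcal C$, and it can only be removed across modes, not by $W_Q,W_K,W_V,W_{\mathrm{out}}$. In addition, the softmax normalizes over $g$ at each site, so making $\bar p_{\cdot,g}\propto\Phi_g+c$ needs a slack mode that keeps the per-site softmax denominator independent of $x_j$.

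Consequently the final appeal to MLP universality is unsupported: nothing shows that $Z^{(n)}_{j,:}$ determines both $\Phi(x_j)$ and $\mathcal KF(a)$.

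The paper's sketch does not supply this either. Its Step 3 sets $\alpha\approx I_G$ and asks the channel product $W_VW_{\mathrm{out}}$ to implement the mode-space readout $W$ composed with $\mathcal K_m$, which is the same conflation.

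A repair within your outline is to put the test functionals in the \emph{channels} of $s_j$ instead of across modes:
\begin{enumerate}
\item Have the encoder make $s_{j,k}\approx a(x_j)\,(\mathcal K\Phi)_k(x_j)$ for $k\le m$. The mixture weights $w_{j,i}$ carry $a(x_j)$, and the Gaussian density lemma supplies the $x$-dependence through kernels with fixed centers and scales.
\item Use uniform windows ($W_p=0$). Then every token, and hence every scattered $y_j$, carries a multiple of $\mathcal KF(a)$, whatever $\alpha$ is.
\item Keep $\lambda\in(0,1)$, and let $W_{\mathrm{out}}$ write only into coordinates where $Z_j$ is negligible. Then $Z^{(n)}_{j,:}$ also retains Gaussian features of $x_j$, plus an $a$-independent reference coordinate from which the decoder can undo the row renormalization.
\end{enumerate}
The decoder's input now contains, continuously and on a compact set, both a linear proxy for $\Phi(x_j)$ and the vector $\mathcal KF(a)$. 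MLP universality then legitimately yields $\Phi(x_j)^\top\mathcal KF(a)$.

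Like the paper's argument, this tacitly needs $\mathcal C$ to be compact in $C(\Omega)$ rather than merely in $L^p$. The encoder reads point values $a(x_j)$, and the measurements are Riemann sums.
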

\textit{Sketch.} By Lemma~\ref{lem:gm-density} and universal approximation of MLPs, windows
$p(\mathbf{x},g)$ and latent features $S(Z(\mathbf{x}))$ approximate $\Phi(\mathbf{x})$; attention
realizes a trainable $\mathcal{K}$ on the $G$ modes. The scatter-and-decoder emulate the output
feature map. Increasing $G$ and widths yields density in the space of continuous operators.

\begin{proof}
We prove the claim for operators on compact domains by reducing to a finite–rank Mercer
approximation and showing that each stage of our pipeline can approximate the corresponding
finite–dimensional objects arbitrarily well. Throughout, $\|\cdot\|_{\mathcal{X}\to\mathcal{Y}}$
denotes the operator norm on bounded subsets.

\paragraph{Step 0: Mercer (or low–rank) truncation.}
Assume $\mathcal{T}$ is continuous on bounded sets and admits either a Hilbert--Schmidt kernel
$K(\mathbf{x},\mathbf{x}')$ or, more generally, a low–rank factorization
$\mathcal{T}\approx \Phi(\cdot)\,\mathcal{K}\,\Phi(\cdot)^\top$ with continuous
$\Phi:\Omega\to\mathbb{R}^{m}$. In the Mercer case, by spectral theory,
\[
K(\mathbf{x},\mathbf{x}')=\sum_{r=1}^{\infty}\lambda_r\,\varphi_r(\mathbf{x})\,\varphi_r(\mathbf{x}'),
\quad \lambda_r\ge 0,\;\{\varphi_r\}\subset C(\Omega),
\]
and the partial sums define finite–rank operators
$\mathcal{T}_m f(\mathbf{x})=\sum_{r=1}^m \lambda_r \varphi_r(\mathbf{x})\int \varphi_r(\mathbf{x}')f(\mathbf{x}')\,d\mathbf{x}'$
with $\|\mathcal{T}-\mathcal{T}_m\|\to 0$ as $m\to\infty$ (uniform on compacta).
In the given low–rank form, select $m$ and continuous $\Phi_m:\Omega\to\mathbb{R}^m$,
$\mathcal{K}_m\in\mathbb{R}^{m\times m}$ such that
\begin{equation}
\label{eq:rank-m}
\big\|\mathcal{T}-\mathcal{T}_m\big\| < \varepsilon/3,
\qquad
\mathcal{T}_m f(\mathbf{x})
= \Phi_m(\mathbf{x})\,\mathcal{K}_m \int_{\Omega}\Phi_m(\mathbf{x}')^\top f(\mathbf{x}')\,d\mathbf{x}'.
\end{equation}

\paragraph{Step 1: Approximating the feature maps by Gaussian basis + MLPs.}
By Lemma~\ref{lem:gm-density} (density of Gaussian mixtures) and universal approximation of MLPs,
for any $\delta>0$ there exist:
(i) a pointwise encoder/evaluator producing $Z(\mathbf{x})\in\mathbb{R}^{G}$
from Gaussian particles $(\mu,\sigma,w)$ and a small MLP $S$ such that the
\emph{trial features} $\Psi(\mathbf{x})\in\mathbb{R}^{D}$, defined by $\Psi(\mathbf{x})=S(Z(\mathbf{x}))$,
satisfy
\begin{equation}
\label{eq:trial-approx}
\sup_{\mathbf{x}\in\Omega}\big\|\Psi(\mathbf{x})- \Phi_m(\mathbf{x})\big\|_2 < \delta;
\end{equation}
(ii) head–wise \emph{Gaussian modal windows} $p(\mathbf{x},g)\ge 0$ with $\sum_{g=1}^G p(\mathbf{x},g)=1$,
implemented by linear maps on $[Z(\mathbf{x}),(\mu,\sigma,w)(\mathbf{x})]$ and a softmax,
such that the \emph{test} functionals
\[
\mathcal{M}_g(f)=\int_{\Omega} p(\mathbf{x},g)\, f(\mathbf{x})\, d\mathbf{x}
\]
approximate the $m$ target coordinates $\int \Phi_m(\mathbf{x})^\top f(\mathbf{x})\,d\mathbf{x}$
after a fixed linear readout. Concretely, there exists $W\in\mathbb{R}^{m\times G}$ with
$\|\,W\,[\mathcal{M}_g(\cdot)]_{g=1}^G - \int \Phi_m(\cdot)^\top(\cdot)\,\| < C_1\delta$.
(One can view $Wp(\mathbf{x},\cdot)$ as a learned quadrature/test family for the $m$ coordinates.)

\paragraph{Step 2: Discrete PG measurement and quadrature error.}
Given a discretization $\{\mathbf{x}_j\}_{j=1}^{N}$ with empirical measure converging to the
sampling measure on $\Omega$, the $N\!\to\!G$ aggregation used in Sec.~\ref{sec:pg-attn}
forms tokens
\[
t_g \;=\; \frac{\sum_{j=1}^{N} p(\mathbf{x}_j,g)\,\Psi(\mathbf{x}_j)}{\sum_{j=1}^{N} p(\mathbf{x}_j,g)}
\quad\in\mathbb{R}^{D}.
\]
By uniform continuity of $\Psi$ and $p(\cdot,g)$ on compact $\Omega$, Riemann (or Monte Carlo)
sums converge to the integrals. Hence there exists $N_0$ so that for all $N\ge N_0$,
\begin{equation}
\label{eq:meas-approx}
\left\|
\Big[\;t_g\;\Big]_{g=1}^{G} \;-\;
\left[\;\frac{\int p(\mathbf{x},g)\,\Psi(\mathbf{x})\,d\mathbf{x}}
            {\int p(\mathbf{x},g)\,d\mathbf{x}}\;\right]_{g=1}^{G}
\right\|
< C_2\delta .
\end{equation}
Post-multiplying by $W$ and using \eqref{eq:trial-approx} shows that
the vector of $m$ measured coordinates is within $C_3\delta$ of
$\int \Phi_m(\mathbf{x})^\top f(\mathbf{x})\,d\mathbf{x}$ for any $f$ in a bounded set.

\paragraph{Step 3: Implementing the modal coupling by attention + linear maps.}
We next show that the $G\times G$ \emph{modal attention} stage can realize the finite linear map
$\mathcal{K}_m$ (up to basis changes) to arbitrary precision.
Using the head projections $W_z$ and $W_{\mathrm{out}}$, the attention block computes
\[
\widetilde{T} \;=\; \alpha\, \big(T W_V\big),\qquad
Y \;=\; \big(\widetilde{T}\big) W_{\mathrm{out}},
\]
where $T\in\mathbb{R}^{G\times D}$ stacks the tokens $t_g$, $\alpha$ is the
softmax attention matrix, and $W_V,W_{\mathrm{out}}$ are learned linear maps.
Since softmax can approximate a Kronecker–delta (by sending on–diagonal logits to $+\infty$ and
off–diagonal to $-\infty$), we can set $\alpha\approx I_G$ arbitrarily closely. Then
$Y \approx T (W_V W_{\mathrm{out}})$. Because $W_V,W_{\mathrm{out}}$ are unconstrained,
their product can approximate any target matrix $M\in\mathbb{R}^{D\times m}$ to arbitrary precision.
Choosing $M$ to implement the composition $W\,\mathcal{K}_m$ (after the measurement map from Step~2),
we obtain a block that emulates $v \mapsto \mathcal{K}_m v$ in the $m$–dimensional modal coordinates.
(If desired, one may keep $\alpha$ nontrivial and absorb its effect into the surrounding linear maps;
the argument is unchanged.)

\paragraph{Step 4: Scatter and pointwise decoding.}
The $G\!\to\!N$ scatter re-distributes the mixed modal features back to locations via the same
windows $p(\mathbf{x},g)$, followed by a pointwise decoder MLP
$f^{\mathrm{dec}}_{\phi}:\mathbb{R}^{G}\to\mathbb{R}^{c_{\mathrm{out}}}$.
Since MLPs are universal approximators on compacta, the composition can approximate the desired
output feature map $\mathbf{x}\mapsto \Phi_m(\mathbf{x})$ (or its linear image) uniformly, matching
the form in \eqref{eq:rank-m}.

\paragraph{Step 5: Error aggregation.}
Let $\varepsilon_m=\|\mathcal{T}-\mathcal{T}_m\| < \varepsilon/3$ be the truncation error.
Pick $\delta>0$ sufficiently small and $N$ sufficiently large so that:
(i) the feature/window approximations introduce at most $C\delta$ error in the measured
coordinates (Steps 1–2), (ii) the attention+linear block approximates the modal coupling
$\mathcal{K}_m$ within $C\delta$ uniformly on bounded sets (Step 3), and
(iii) the scatter+decoder approximates the output features within $C\delta$ uniformly (Step 4).
By stability (continuity) of all stages,
\[
\big\|\mathcal{G}_\Theta - \mathcal{T}\big\|
\;\le\; \underbrace{\big\|\mathcal{G}_\Theta - \mathcal{T}_m\big\|}_{\le C\delta}
\;+\; \underbrace{\big\|\mathcal{T}_m - \mathcal{T}\big\|}_{\varepsilon_m}
\;<\; C\delta + \varepsilon/3.
\]
Choosing $\delta$ so that $C\delta<2\varepsilon/3$ yields
$\|\mathcal{G}_\Theta-\mathcal{T}\|<\varepsilon$.

Combining the steps completes the proof.
\end{proof}

\section{Implementation Details}
\label{sec:implementation}

\subsection{Baseline implementations}
Baseline models (Geo-FNO, LSM, Galerkin Transformer, GNOT, ONO, Transolver) are adapted from the
\emph{Neural-Solver-Library} \citep{24:Transolver} reference implementation at \url{https://github.com/thuml/Neural-Solver-Library}. Other models are adapted from their official repositories. Unless otherwise noted, we keep an identical training schedule across baselines: AdamW optimizer,
initial learning rate $10^{-3}$ with a \texttt{StepLR} scheduler (step size and decay factor as in the
library’s default per dataset), up to 500 epochs with validation early stopping, the same data
normalization/inverse-normalization protocol, and matched rollout/evaluation settings. 

\subsection{GPO configurations}

The dataset-specific configurations of GPO are summarized in Table~\ref{table:gpo_config}. We provide the source code of GPO in the Supplementary Material.

\begin{table*}[ht]
    \caption{\textbf{Model configurations of GPO.}}
    \label{table:gpo_config}
    \begin{sc}
        \renewcommand{\multirowsetup}{\centering}
        \footnotesize 
        \resizebox{\linewidth}{!}{
        \begin{tabular}{c|cccc}
            \toprule
                \multirow{2}{*}{Benchmarks} & \multicolumn{4}{c}{Model Configurations} \\ 
                \cmidrule{2-5} & hidden\_dim & num\_layers & num\_heads & num\_gaussians \\
            \midrule
                NS2D & 128 & 8 & 8 & 32 \\
                NS3D & 64 & 8 & 4 & 16 \\
                ERA5-temp & 64 & 4 & 4 & 16 \\
                ERA5-wind u & 64 & 4 & 4 & 16 \\
                Carra & 64 & 4 & 4 & 16 \\
                Airfoil   & 64  & 4 & 4 & 16 \\
                Turbulent & 128 & 8 & 8 & 32 \\
                PlanetSWE & 64  & 8 & 4 & 16 \\
            \bottomrule
        \end{tabular}}
    \end{sc}
\end{table*}

\section{Model Analysis}
\label{sec:analysis}

\subsection{Ablation Study}
\label{sec:ablation}

Table~\ref{table:ablation} reports the $L_2$ error under controlled variants (parameter counts are adjusted to be comparable). \textbf{(i) Necessity of the Gaussian Field.} Replacing the Gaussian Field with plain MLP encoder/decoder (\texttt{w/o Gaussian Field}) degrades accuracy markedly ($7.44{\times}10^{-2}$), and removing the PG operator while keeping the Gaussian Field (\texttt{w/o PG Operator}) is even worse ($8.57{\times}10^{-2}$). Notably, even a \emph{single} Gaussian per site (\texttt{num\_gaussian=1}) already improves to $6.28{\times}10^{-2}$, indicating that particleized Gaussian evaluation is a beneficial inductive bias beyond a black-box MLP. \textbf{(ii) Synergy of PG Operator and Gaussian Field.} Combining the Gaussian basis with the PG Gaussian Attention yields the full GPO (baseline: $3.90{\times}10^{-2}$), demonstrating that the PG measurement$\rightarrow$modal coupling$\rightarrow$scatter complements the local particle representation; each component alone is insufficient. \textbf{(iii) Effect of the number of Gaussians.} Increasing \texttt{num\_gaussian} consistently reduces error (from $4.21{\times}10^{-2}$ at $G{=}16$ to $3.84{\times}10^{-2}$ at $G{=}64$), but with diminishing returns; considering cost (Sec.~\ref{sec:complexity}), we adopt $G{=}16/32$ as a practical trade-off between efficiency and accuracy.

\begin{table}[ht]
\centering
\caption{Ablation results comparing the \(L_2\) error of different configurations.}
\label{table:ablation}
\begin{sc}
\resizebox{0.5\linewidth}{!}{
\begin{tabular}{l|l}
\toprule
Model Configuration & $L_{2}$ Error \\ 
\midrule
w/o PG Operator & 8.57E-02 \\
w/o Gaussian Field & 7.44E-02 \\
num\_gaussian = 1 & 6.28E-02 \\
num\_gaussian = 16 & 4.21E-02\\
num\_gaussian = 64 & 3.84E-02\\
\midrule
GPO (Baseline) & 3.90E-02 \\ 
\bottomrule
\end{tabular}}
\end{sc}
\end{table}

\subsection{Computational Complexity}
\label{sec:complexity}

Empirical measurements (Table~\ref{table:efficiency}, $64{\times}64{\times}3$, batch~16) corroborate the analysis:
GPO attains low memory footprint (2{,}313\,MiB) and competitive time (44.66\,s/epoch train; 1.67\,s/epoch
inference) with a modest parameter count (6.10\,MB), outperforming attention baselines in training speed
(Galerkin/Transolver/ONO/GNOT) and GPU memory, while remaining close to spectral baselines at inference.
Although FNO is fastest on this small grid, GPO’s cost grows near–linearly with $N$ and remains stable when
moving to higher resolutions or 3D, where spatial attention becomes prohibitive and FFT memory/IO costs rise.

By aggregating \emph{locally} ($N\!\leftrightarrow\!G$) and coupling \emph{globally} only in modal space
($G{\times}G$), GPO delivers resolution–agnostic efficiency: linear scaling in $N$, controllable quadratic
dependence on $G$, and favorable memory/time trade–offs across 2D/3D and irregular domains.

\begin{table*}[ht]
\centering
\caption{\textbf{Computational efficiency comparison across models} (measured with input size 64$\times$64$\times$3, batch size 16).}
\label{table:efficiency}
\begin{sc}
\resizebox{\linewidth}{!}{
\begin{tabular}{l|cc|ccc}
\toprule
Model & Param Count & Param (MB) & GPU Mem (MiB) & Train (s/epoch) & Inference (s/epoch) \\
\midrule
FNO                     & 640,305       & 4.84  & 949   & 28.27     & 0.5  \\
LSM                     & 19,187,457    & 73.23 & 2,875 & 48.42     & 1.73 \\
Galerkin Transformer    & 1,096,321     & 4.18  & 4,301 & 65.29     & 2.96 \\
GNOT                    & 2,485,901     & 9.48  & 8,643 & 139.42    & 6.09 \\
Transolver              & 3,069,889     & 11.71 & 4,917 & 97.03     & 4.10 \\
ONO                     & 1,596,673     & 6.09  & 6,163 & 94.80     & 4.27 \\
\midrule
\textbf{GPO (Ours)}     & 1,598,257     & 6.10  & 2,313 & 44.66     & 1.67 \\
\bottomrule
\end{tabular}}
\end{sc}
\end{table*}

\section{Additional Visualizations}
\label{sec:vis}

\begin{figure}[ht]
    \centering
    \includegraphics[width=0.85\linewidth]{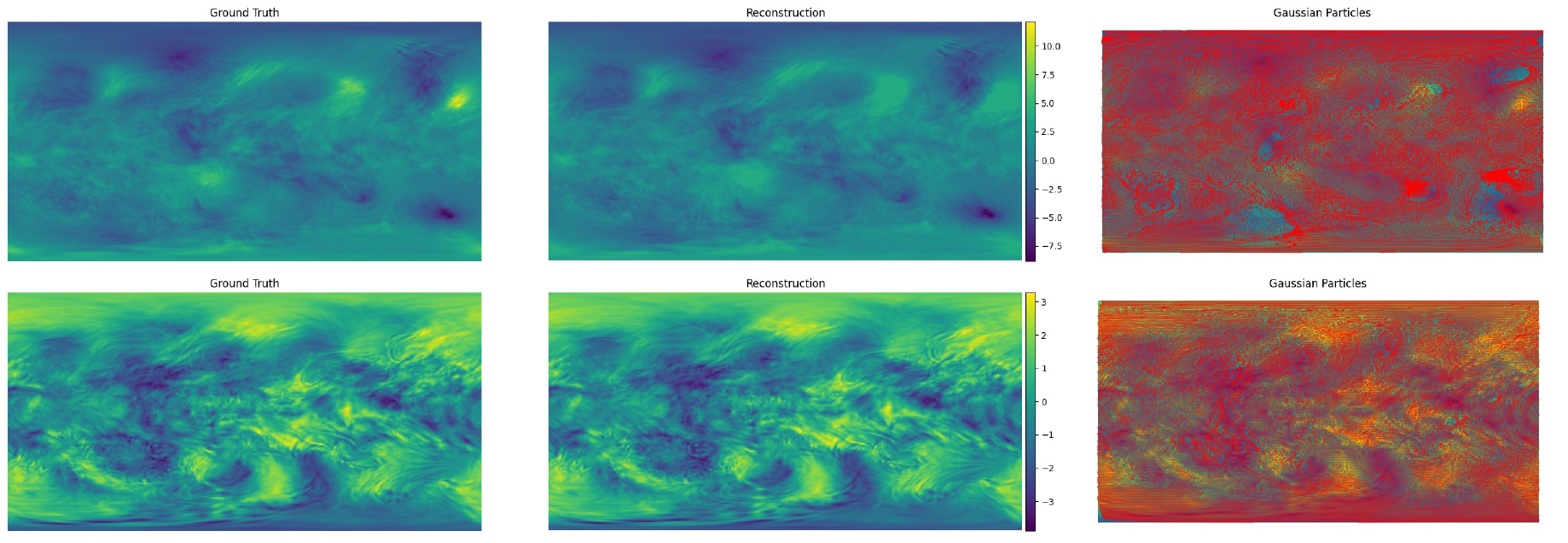}
    \caption{Interpretable visualization of ERA5 on an \textbf{in-distribution sample (above)} and an \textbf{out-of-distribution sample (below)}. Left to right: ground truth, reconstruction from the Gaussian basis and learned Gaussian particles overlaid (ellipses: center $\mu$, axes $\propto\sigma$, color/size $\propto w$). }
    \label{fig:era5}
\end{figure}

\end{document}